\theoremstyle{definition}
\newtheorem{property}{Property}
\newtheorem{lemma}{Lemma}
\newcommand{\YsubI}{Y_{I_t}}
\newcommand{\XsubI}{X_{I_t}}
\newcommand{\ThetasubI}{\Theta_{I_t}}
\newcommand{\YsubC}{Y_{C_t}}
\newcommand{\XsubC}{X_{C_t}}
\newcommand{\ThetasubC}{\Theta_{C_t}}
\newcommand{\YsubbarC}{Y_{\bar{C}_t}}
\newcommand{\XsubbarC}{X_{\bar{C}_t}}
\newcommand{\ThetasubbarC}{\Theta_{\bar{C}_t}}
\newcommand{\YsubS}{Y_{S_t}}
\newcommand{\XsubS}{X_{S_t}}
\newcommand{\ThetasubS}{\Theta_{S_t}}
\DeclareMathOperator*{\argmax}{arg\,max}
\DeclareMathOperator*{\argmin}{arg\,min}
\begin{document}

\title{Dynamic Model Tree for Interpretable Data Stream Learning}

\author{\IEEEauthorblockN{Johannes Haug}
\IEEEauthorblockA{\textit{University of Tuebingen} \\
Tuebingen, Germany \\
johannes-christian.haug@uni-tuebingen.de}
\and
\IEEEauthorblockN{Klaus Broelemann}
\IEEEauthorblockA{\textit{Schufa Holding AG} \\
Wiesbaden, Germany \\
klaus.broelemann@schufa.de}
\and
\IEEEauthorblockN{Gjergji Kasneci}
\IEEEauthorblockA{\textit{University of Tuebingen} \\
Tuebingen, Germany \\
gjergji.kasneci@uni-tuebingen.de}
}

\maketitle

\begin{abstract}
Data streams are ubiquitous in modern business and society. In practice, data streams may evolve over time and cannot be stored indefinitely. Effective and transparent machine learning on data streams is thus often challenging. Hoeffding Trees have emerged as a state-of-the art for online predictive modelling. They are easy to train and provide meaningful convergence guarantees under a stationary process. Yet, at the same time, Hoeffding Trees often require heuristic and costly extensions to adjust to distributional change, which may considerably impair their interpretability. In this work, we revisit Model Trees for machine learning in evolving data streams. Model Trees are able to maintain more flexible and locally robust representations of the active data concept, making them a natural fit for data stream applications. Our novel framework, called Dynamic Model Tree, satisfies desirable consistency and minimality properties. In experiments with synthetic and real-world tabular streaming data sets, we show that the proposed framework can drastically reduce the number of splits required by existing incremental decision trees. At the same time, our framework often outperforms state-of-the-art models in terms of predictive quality -- especially when concept drift is involved. Dynamic Model Trees are thus a powerful online learning framework that contributes to more lightweight and interpretable machine learning in data streams.
\end{abstract}

\begin{IEEEkeywords}
machine learning, data stream, model tree, concept drift, interpretability
\end{IEEEkeywords}

\section{Introduction}
Large-scale data streams are integral to most modern web-based applications such as online credit scoring, e-commerce or social media. Accordingly, the demand for powerful streaming machine learning models has increased. In practice, streaming or online learning models have to cope with limited hardware capacity and drifts of the data generating concept. Efficient, accurate and interpretable machine learning for evolving data streams is thus a major challenge.

Unlike traditional batch learning models, online learning models are updated incrementally. In this way, online learning models can be trained without the entire data set being available in main memory. Consequently, online learning models enable machine learning in practical applications that generate a potentially unlimited amount of data, e.g. large sensor systems or credit card transactions.

Online learning models usually have to cope with limited hardware capacity and drifts of the data generating concept. Changing customer preferences or emerging social media trends are prominent examples of such concept drift. In the worst case, concept drift may render previously learned concepts obsolete. 

Accordingly, online learning models must provide discriminative predictions and adjust to concept drift, while reducing overall resource consumption. In addition, much attention has recently been paid to the interpretability of machine learning models \cite{miller2019,rudin2019stop}. In particular, high-stakes applications and regulations (e.g. the EU General Data
Protection Regulation GDPR) require models to be interpretable. For example, if a model is used to predict the risk of recidivism or the probability of a loan default, it can be crucial to be able to describe the model in understandable terms. However, compared to other domains such as image recognition \cite{chen2018}, relatively little attention has been paid to the interpretability of machine learning models in evolving data streams. As one of the first works, we therefore briefly outline important aspects of interpretable online learning below.

\input{plot_motivation}

\subsection{On ``Interpretability'' in Evolving Data Streams}\label{sec:interpretability}
In general, we distinguish between post-hoc explainability and intrinsic interpretability \cite{Du2019}. The former concerns dedicated methods, e.g. local feature attributions \cite{kasneci2016licon,lundberg2017unified,ribeiro2016should,haug2021baselines}, that allow to explain complex (black-box) models. Conversely, we speak of intrinsic interpretability when the internal mechanics of the predictive model are inherently understandable to a human.

Interpretability has varying domain-specific definitions \cite{rudin2019stop} and cannot be measured in a standardized way \cite{Du2019}. Hence, interpretability is often represented by heuristic measures such as model size or complexity \cite{Bibal2016}. Intuitively, it is easier for humans to attribute meaning to individual model parameters when complexity is low. That is, the less complex a model is, the easier it is to interpret. For example, linear models and decision trees are typically considered highly interpretable. Specifically, the interpretability of linear models can be linked to their sparsity, i.e., the number of nonzero parameters. Similarly, the interpretability of decision trees can be quantified by the number of split nodes or the depth of the tree \cite{Moshkovitz2021}.

Since online learning models are incrementally updated, the parameters and model complexity can change between time steps. Therefore, in order to achieve interpretable online learning, we argue that it is not sufficient to deliver low complexity at each individual time step. Rather, changes in model complexity must also be comprehensible to humans. Ultimately, this requires that all updates to an interpretable online model are understandable. For example, the model should be able to answer questions like ``Why have you removed this ensemble component at time step $t$?'' or ``Why have you split this node at time step $u$?''. In this sense, online interpretability is closely related to the robustness to noise and adaptability to concept drift. For example, model adaptations could be made understandable by linking them to changes of the (approximate) data concept or, ideally, corresponding events in the real world.

Although this discussion is certainly not exhaustive, it serves as a first guide for the development of inherently interpretable online machine learning methods. Note that a more formal definition of online interpretability is beyond the scope of this paper and is left for future work.

\subsection{The State-Of-The-Art in Online Machine Learning}
Incremental decision trees have emerged as the state-of-the-art for online machine learning. The Hoeffding Tree is one of the most prominent frameworks. Hoeffding Trees use Hoeffding's inequality to decide at which time step, i.e. after how many observations, a leaf node will be split \cite{domingos2000mining,hulten2001mining,bifet2009adaptive,manapragada2018extremely}. A Hoeffding Tree comes asymptotically arbitrarily close to a hypothetical, batch-trained decision tree, given that the data generating process is stationary. Similar to batch-trained decision trees, Hoeffding Trees benefit from high efficiency and transparency. 

However, the basic Hoeffding Tree algorithm, VFDT \cite{domingos2000mining}, may grow indefinitely. This behaviour can considerably impair the performance of the VFDT and -- in the above sense -- its interpretability. In general, such infinite growth can be avoided, e.g. by extending the Hoeffding Tree with dedicated drift detection strategies \cite{bifet2009adaptive}. However, such extensions often increase the complexity and make split or prune decisions less intuitive. Moreover, Hoeffding Trees suffer practical limitations. For example, the way in which Hoeffding's inequality and heuristic purity measures are used within the framework has been repeatedly questioned \cite{rutkowski2012decision,matuszyk2013correcting,rutkowski2014cart}.

\subsection{Model Trees As An Alternative to Hoeffding Trees}
In this work, we revisit Model Trees as an alternative to Hoeffding Trees \cite{quinlan1992learning,broelemann2018gradient,potts2005incremental,ikonomovska2011learning}. Model Trees have much in common with regular decision trees, but contain simple predictive models in place of each (leaf) node. Hence, similar to an ensemble, Model Trees are a collection of weak learners that are combined in a structured way through a set of binary decisions. However, unlike Hierarchical Mixtures of Experts, Model Trees use only a single feature to split at each inner node. Accordingly, Model Trees preserve much of the simplicity of a regular decision tree.

Owing to the simple models, Model Trees are able to apply a less rigid separation of observations in the leaf nodes. In this way, Model Trees are generally more flexible regarding the active data concept than existing frameworks like the Hoeffding Tree (see Figure \ref{fig:dmt_ht}). In particular, Model Trees can represent linear relationships with only a few splits. Hence, Model Trees can usually achieve high predictive quality while using a simple and robust representation.

Replacing regular leaf nodes with simple models in an otherwise unmodified tree increases complexity. However, Model Trees often remain extremely shallow and thus interpretable, as we show in experiments. In addition, unlike Hoeffding Trees, Model Trees allow feature weights for different subgroups to be extracted directly from the simple models. In comparison to majority weighting schemes, this can be an advantage for local feature-based explanations.

\subsection{Our Contribution}
In this paper, we introduce a novel online learning framework called \textit{Dynamic Model Tree}. We show that the simple models of a Model Tree can be leveraged to define node-specific gain functions. These gain functions guarantee sensible consistency and minimality properties, which contribute to more intuitive and interpretable online learning. Compared to existing state-of-the-art methods such as Hoeffding Trees or earlier incremental Model Trees \cite{potts2005incremental,ikonomovska2011learning}, Dynamic Model Trees adapt to concept drift by design. In particular, the proposed framework does not rely on Hoeffding's inequality, heuristic purity measures or explicit concept drift detection mechanisms. Consequently, the Dynamic Model Tree eliminates some of the most fundamental weaknesses of existing online decision trees.

In summary, the contributions of this work are as follows:
\begin{itemize}
    \item We specify valuable properties related to the consistency and minimality of incremental decision trees (Section \ref{sec:properties}). Combined, these properties lead to more interpretable online learning as described above.
    \item We introduce the \textit{Dynamic Model Tree} framework (Section \ref{sec:framework}). In particular, we define generic gain functions that guarantee the above-mentioned properties and can be efficiently approximated via gradients \cite{broelemann2018gradient}.
    \item We propose an effective implementation of the Dynamic Model Tree that uses Generalized Linear Models and the negative log-likelihood loss (Section \ref{sec:implementation}). 
    \item We evaluate the Dynamic Model Tree on multiple synthetic and real-world tabular data sets with different types of concept drift (Section \ref{sec:experiments}). While maintaining high efficiency, our implementation often outperforms existing classifiers in terms of predictive quality and complexity.
\end{itemize}

\section{Related Work}\label{sec:related_work}
Incremental decision trees are a powerful class of online predictors. In the following, we briefly outline state-of-the-art algorithms based on the Hoeffding Tree, along with their limitations. Moreover, we discuss previous works on incremental Model Trees. For more information about online learning, we refer to recent surveys \cite{gama2012survey,ditzler2015learning,gomes2017survey}.

\subsection{Variations of the Hoeffding Tree}
The Very Fast Decision Tree (VFDT) is the first and basic implementation of a Hoeffding Tree \cite{domingos2000mining}. As mentioned above, it has practical limitations. In particular, the VFDT assumes that a relatively small set of past and current observations is representative of all future observations -- a misconception under realistic streaming conditions. Accordingly, the VFDT grows indefinitely and does not revisit old split decisions, which can impair its interpretability.

Most of the limitations of the basic VFDT can be overcome, e.g. by using regularization \cite{barddal2020regularized}, different probabilistic inequalities \cite{rutkowski2012decision}, gain measures \cite{matuszyk2013correcting,rutkowski2014cart} or tricks in the implementation \cite{manapragada2020emergent}. To increase the predictive performance under concept drift, the Hoeffding Tree may also be augmented with adaptation strategies like alternate tree growth \cite{hulten2001mining}, sliding windows \cite{bifet2009adaptive} or a dynamic replacement of inner nodes \cite{manapragada2018extremely}. In addition, ensembles of Hoeffding Trees, e.g. Bagging or Boosting \cite{bifet2010leveraging,montiel2020adaptive}, can increase the predictive performance of the basic models at the cost of higher overall complexity.

The Hoeffding Tree has gained popularity due to its rigorous convergence guarantees, efficiency, extensibility and accessibility via packages like MOA \cite{bifet2010moa} or scikit-multiflow \cite{montiel2018scikit}. However, the inherent limitations of the basic architecture may ultimately leave users in doubt about the reliability of the Hoeffding Tree. Hence, we argue that a different framework is needed, which offers a similar level of efficiency and extensibility, but is more flexible and interpretable in a dynamic online environment.

\subsection{Incremental Model Trees}\label{sec:incremental_MT}
Hoeffding Trees have been augmented with simple models, such as Na\"ive Bayes \cite{gama2003accurate} and Perceptrons \cite{bifet2010fast}. Such extensions often provide considerable improvements in predictive performance compared to majority-weighted leaves. Surprisingly, however, the more general family of Model Trees has received only little attention in online learning scenarios. Notable exceptions include the work by \cite{potts2005incremental}, which is aimed at stationary applications, and the FIMT-DD model \cite{ikonomovska2011learning}. FIMT-DD was introduced as a solution for online regression tasks. Similar to Hoeffding Trees, FIMT-DD applies Hoeffding's inequality to split at the inner nodes. Specifically, FIMT-DD aims to find the split that gives the largest reduction in the standard deviation of the target variable. To avoid infinite growth, FIMT-DD employs explicit concept drift detection via the Page-Hinkley test and offers various adaptation strategies. The FIMT-DD model and the Dynamic Model Tree proposed in this work have fundamental differences, which we outline in Section \ref{sec:implementation}.

\section{Preliminaries and Properties for Online Decision Tree Learning}
\label{sec:properties}
A data stream can be represented by a potentially infinite series of time steps $1,..,t,..,T$. Let $X_t \in \mathbb{R}^{n_t \times m}$ be the matrix of observations at time step $t$, where $n_t \geq 1$ is the number of observations and $m \geq 1$ is the number of features. We denote $Y_t \in \mathbb{R}^{n_t}$ the corresponding labels at time step $t$. The observations and labels are drawn from a distribution $P_t(X,Y)$, which we call the active concept at time step $t$. Concept drift is defined as a change in the active concept between two time steps, i.e. $P_{t_1}(X,Y) \neq P_{t_2}(X,Y)$.

Suppose that an incremental decision tree is parameterized by $\Theta_t$ at time step $t$. We assume that the parameters $\Theta_t$ are given by the context and therefore leave them unspecified. As described in the introduction, we generally aim for models that are discriminative and interpretable. Given our understanding of interpretable online learning and the example in Figure \ref{fig:dmt_ht}, we argue that for equal predictive power, the smaller tree should be preferred. In this context, we identify two crucial properties for training incremental decision trees.

Let $\Omega_t$ be a set of time indices up to time step $t$. Let $X_{\Omega_t}, Y_{\Omega_t}$ be sets of corresponding observations and labels and, for simplicity, let $L(\Omega_t)$ be the shorthand notation for $L(\Theta_{\Omega_t}, Y_{\Omega_t}, X_{\Omega_t})$, which denotes the estimated loss of an incremental decision tree with respect to $\Omega_t$. As before, we assume that the parameters $\Theta_{\Omega_t}$ are given by the context.

\begin{property}[Consistency with Parent Splits]\label{prop:conistency}
    Suppose we perform a split at time step $t$. Let $L_C(\Omega_t)$ be the new estimated loss after the split. An incremental decision tree algorithm is \textit{consistent with parent splits} regarding the set $\Omega_t$, if $L_C(\Omega_t) \leq L(\Omega_t)$.
\end{property}
Accordingly, we must avoid splits that would increase the estimated loss. This property primarily concerns the predictive quality of the obtained tree and is a common objective. Additionally, by choosing an adequate loss function that approximates the active data concept (which we discuss in Section \ref{sec:implementation}), Property \ref{prop:conistency} enables interpretable split decisions.

With the goal of low model complexity, i.e. high interpretability, we add a second property:

\begin{property}[Model Minimality]\label{prop:minimality}
    Suppose there exists a subtree of the incremental decision tree at time step $t$, whose loss is denoted by $L_{alt}(\Omega_t)$. An incremental decision tree algorithm preserves \textit{model minimality} regarding the set $\Omega_t$, if for $L(\Omega_t) = L_{alt}(\Omega_t)$ it retains the tree with fewer number of parameters.
\end{property}
Hence, we are bound to replace a complex tree, whenever it contains a simpler subtree that has equal predictive quality regarding $\Omega_t$. For practical purposes, this means that we have to prune or replace nodes or branches of the tree that no longer improve the estimated loss (since the number of parameters per node is usually fixed). Consequently, Property \ref{prop:minimality} also implies a mechanism to adapt to concept drift.

\section{Dynamic Model Tree}\label{sec:framework}
In this paper, we extend Model Trees to a novel framework for adaptive predictive modelling in dynamic data streams that adheres to the aforementioned properties.

A Dynamic Model Tree is constructed in a similar fashion as regular decision trees. That is, we begin with a single root node and gradually grow and prune the tree over time. Each node of a Dynamic Model Tree can be represented by a set of time indices $S_t \subseteq \{1,\dots,t\}$ corresponding to the observations that have reached the node up to time step $t$. Other than existing Model Trees, a Dynamic Model Tree maintains simple predictive models at both leaf and inner nodes (see Figure \ref{fig:dmt_components}). These models are used to identify optimal split candidates (i.e., feature-value combinations) and make predictions. Let $\XsubS, \YsubS$ be the observations and labels, and $\ThetasubS$ the parameters of the simple model at a node corresponding to the time indices in $S_t$. We aim to find the parameters that minimize a loss function $L(\cdot) \geq 0$:
\begin{align}\label{eq:objective}
    \Theta^*_{S_t} &= \argmin_{\ThetasubS} ~L(\ThetasubS, \YsubS, \XsubS) \nonumber\\ 
    &= \argmin_{\ThetasubS} ~\sum_{t \in S_t} L(\theta_t, Y_t, X_t)
\end{align}
We assume independence between time steps; a simplifying, yet common assumption in data stream learning that has been shown to work well in practice. Accordingly, we can update the parameters $\theta_t$ independently at every time step using gradient descent. The optimal parameters from the previous time step can be used as prior parameters at time step $t$. Accordingly, at every time step, we forward incoming observations to a corresponding leaf node, updating each simple model along the path. Once we have updated all relevant simple models, we attempt to grow or prune the Dynamic Model Tree. To this end, we require gain measures that account for the aspired \textit{consistency with parent splits} and \textit{model minimality}.

\begin{figure}[t]
\centering
\subfloat[Leaf Node]{
    \begin{tikzpicture}[x=0.75pt,y=0.75pt,yscale=-1,xscale=1]
    \draw    (350,100.25) -- (350,104.25) ;
    \draw [shift={(350,107.25)}, rotate = 270] [fill={rgb, 255:red, 0; green, 0; blue, 0 }  ][line width=0.08]  [draw opacity=0] (3.57,-1.72) -- (0,0) -- (3.57,1.72) -- cycle    ;
    \draw [color={rgb, 255:red, 26; green, 152; blue, 80 }  ,draw opacity=1 ] [dash pattern={on 0.84pt off 2.51pt}]  (350.4,131.55) -- (350.4,127.52) ;
    \draw [color={rgb, 255:red, 26; green, 152; blue, 80 }  ,draw opacity=1 ] [dash pattern={on 0.84pt off 2.51pt}]  (286.71,131.55) -- (286.71,135.55) ;
    \draw [shift={(286.71,138.55)}, rotate = 270] [fill={rgb, 255:red, 26; green, 152; blue, 80 }  ,fill opacity=1 ][line width=0.08]  [draw opacity=0] (3.57,-1.72) -- (0,0) -- (3.57,1.72) -- cycle    ;
    \draw [color={rgb, 255:red, 26; green, 152; blue, 80 }  ,draw opacity=1 ] [dash pattern={on 0.84pt off 2.51pt}]  (286.71,131.55) -- (414.09,131.55) ;
    \draw  [color={rgb, 255:red, 26; green, 152; blue, 80 }  ,draw opacity=1 ][fill={rgb, 255:red, 26; green, 152; blue, 80 }  ,fill opacity=0.3 ][dash pattern={on 0.84pt off 2.51pt}] (283.21,142.05) .. controls (283.21,140.12) and (284.78,138.55) .. (286.71,138.55) .. controls (288.65,138.55) and (290.21,140.12) .. (290.21,142.05) .. controls (290.21,143.98) and (288.65,145.55) .. (286.71,145.55) .. controls (284.78,145.55) and (283.21,143.98) .. (283.21,142.05) -- cycle ;
    \draw  [color={rgb, 255:red, 26; green, 152; blue, 80 }  ,draw opacity=1 ][fill={rgb, 255:red, 26; green, 152; blue, 80 }  ,fill opacity=0.3 ][dash pattern={on 0.84pt off 2.51pt}] (410.59,142.05) .. controls (410.59,140.12) and (412.15,138.55) .. (414.09,138.55) .. controls (416.02,138.55) and (417.59,140.12) .. (417.59,142.05) .. controls (417.59,143.98) and (416.02,145.55) .. (414.09,145.55) .. controls (412.15,145.55) and (410.59,143.98) .. (410.59,142.05) -- cycle ;
    \draw [color={rgb, 255:red, 26; green, 152; blue, 80 }  ,draw opacity=1 ] [dash pattern={on 0.84pt off 2.51pt}]  (414.09,131.55) -- (414.09,135.55) ;
    \draw [shift={(414.09,138.55)}, rotate = 270] [fill={rgb, 255:red, 26; green, 152; blue, 80 }  ,fill opacity=1 ][line width=0.08]  [draw opacity=0] (3.57,-1.72) -- (0,0) -- (3.57,1.72) -- cycle    ;
    \draw   (339.5,117.75) .. controls (339.5,111.95) and (344.2,107.25) .. (350,107.25) .. controls (355.8,107.25) and (360.5,111.95) .. (360.5,117.75) .. controls (360.5,123.55) and (355.8,128.25) .. (350,128.25) .. controls (344.2,128.25) and (339.5,123.55) .. (339.5,117.75) -- cycle ;
    
    \draw (332,130.9) node [anchor=north west][inner sep=0.75pt]  [font=\tiny,color={rgb, 255:red, 26; green, 152; blue, 80 }  ,opacity=1 ] [align=left] {$\displaystyle ( 3)\overset{?}{\geq } 0$};
    \draw (285.9,121.3) node [anchor=north west][inner sep=0.75pt]  [font=\tiny,color={rgb, 255:red, 26; green, 152; blue, 80 }  ,opacity=1 ] [align=left] {$\displaystyle C_{t} \subseteq S_{t}$};
    \draw (363.6,119.4) node [anchor=north west][inner sep=0.75pt]  [font=\tiny,color={rgb, 255:red, 26; green, 152; blue, 80 }  ,opacity=1 ] [align=left] {$\displaystyle \overline{C}_{t} =S_{t} \backslash C_{t}$};
    \draw (318.5,91.33) node [anchor=north west][inner sep=0.75pt]  [font=\tiny] [align=left] {$\displaystyle S_{t} \subseteq \{1,\dotsc ,t\}$};
    \draw (341.5,112) node [anchor=north west][inner sep=0.75pt]  [font=\tiny] [align=left] {$\displaystyle M^{S}$};

    \end{tikzpicture}

}
\hfill
\subfloat[Inner Node]{
    \begin{tikzpicture}[x=0.75pt,y=0.75pt,yscale=-1,xscale=1]
    \draw    (330,80.25) -- (330,84.25) ;
    \draw [shift={(330,87.25)}, rotate = 270] [fill={rgb, 255:red, 0; green, 0; blue, 0 }  ][line width=0.08]  [draw opacity=0] (3.57,-1.72) -- (0,0) -- (3.57,1.72) -- cycle    ;
    \draw [color={rgb, 255:red, 69; green, 123; blue, 157 }  ,draw opacity=1 ]   (330,111.75) -- (330,107.72) ;
    \draw [color={rgb, 255:red, 69; green, 123; blue, 157 }  ,draw opacity=1 ]   (266.31,111.75) -- (266.31,115.75) ;
    \draw [shift={(266.31,118.75)}, rotate = 270] [fill={rgb, 255:red, 69; green, 123; blue, 157 }  ,fill opacity=1 ][line width=0.08]  [draw opacity=0] (3.57,-1.72) -- (0,0) -- (3.57,1.72) -- cycle    ;
    \draw [color={rgb, 255:red, 69; green, 123; blue, 157 }  ,draw opacity=1 ]   (266.31,111.75) -- (393.69,111.75) ;
    \draw   (319.5,97.75) .. controls (319.5,91.95) and (324.2,87.25) .. (330,87.25) .. controls (335.8,87.25) and (340.5,91.95) .. (340.5,97.75) .. controls (340.5,103.55) and (335.8,108.25) .. (330,108.25) .. controls (324.2,108.25) and (319.5,103.55) .. (319.5,97.75) -- cycle ;
    \draw  [color={rgb, 255:red, 69; green, 123; blue, 157 }  ,draw opacity=1 ][fill={rgb, 255:red, 69; green, 123; blue, 157 }  ,fill opacity=1 ] (262.81,122.25) .. controls (262.81,120.32) and (264.38,118.75) .. (266.31,118.75) .. controls (268.25,118.75) and (269.81,120.32) .. (269.81,122.25) .. controls (269.81,124.18) and (268.25,125.75) .. (266.31,125.75) .. controls (264.38,125.75) and (262.81,124.18) .. (262.81,122.25) -- cycle ;
    \draw  [color={rgb, 255:red, 69; green, 123; blue, 157 }  ,draw opacity=1 ][fill={rgb, 255:red, 69; green, 123; blue, 157 }  ,fill opacity=1 ] (390.19,122.25) .. controls (390.19,120.32) and (391.75,118.75) .. (393.69,118.75) .. controls (395.62,118.75) and (397.19,120.32) .. (397.19,122.25) .. controls (397.19,124.18) and (395.62,125.75) .. (393.69,125.75) .. controls (391.75,125.75) and (390.19,124.18) .. (390.19,122.25) -- cycle ;
    \draw [color={rgb, 255:red, 69; green, 123; blue, 157 }  ,draw opacity=1 ]   (393.69,111.75) -- (393.69,115.75) ;
    \draw [shift={(393.69,118.75)}, rotate = 270] [fill={rgb, 255:red, 69; green, 123; blue, 157 }  ,fill opacity=1 ][line width=0.08]  [draw opacity=0] (3.57,-1.72) -- (0,0) -- (3.57,1.72) -- cycle    ;
    
    \draw (321.5,92) node [anchor=north west][inner sep=0.75pt]  [font=\tiny] [align=left] {$\displaystyle M^{I}$};
    \draw (294,111.5) node [anchor=north west][inner sep=0.75pt]  [font=\tiny,color={rgb, 255:red, 69; green, 123; blue, 157 }  ,opacity=1 ] [align=left] {$\displaystyle ( 4)\overset{?}{\geq } 0\lor ( 5)\overset{?}{\geq } 0$};
    \draw (265.5,101.5) node [anchor=north west][inner sep=0.75pt]  [font=\tiny,color={rgb, 255:red, 69; green, 123; blue, 157 }  ,opacity=1 ] [align=left] {$\displaystyle C_{t} \subseteq I_{t}$};
    \draw (343.5,100) node [anchor=north west][inner sep=0.75pt]  [font=\tiny,color={rgb, 255:red, 69; green, 123; blue, 157 }  ,opacity=1 ] [align=left] {$\displaystyle \overline{C}_{t} =I_{t} \backslash C_{t}$};
    \draw (298.5,71.33) node [anchor=north west][inner sep=0.75pt]  [font=\tiny] [align=left] {$\displaystyle I_{t} \subseteq \{1,\dotsc ,t\}$};

    \end{tikzpicture}

}
\caption{\textbf{DMT Nodes:} Both inner and leaf nodes of a Dynamic Model Tree contain simple models $M$ that are incrementally trained during a subset of time steps $S_t$ and $I_t$, respectively. At every time step $t$, we check at the leaf nodes whether there is a new split candidate with positive gain \eqref{eq:gain_leaf} (green, see also Algorithm \ref{alg:pseudo_code}). Similarly, we check at the inner nodes whether the gains \eqref{eq:gain_inner} or \eqref{eq:gain_prune} are positive, i.e., whether we must replace the current split (blue) and thus prune the old branch.}
\label{fig:dmt_components}
\end{figure}
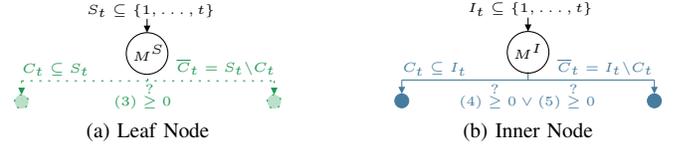

\subsection{Loss-Based Gain Functions}
Typically, decision tree algorithms aim for maximum node purity with respect to the target variable. For this purpose, split decisions are usually based on heuristic purity measures such as the Information Gain or the Gini index. However, the simple models of a Dynamic Model Tree offer a fundamental advantage in terms of the proposed properties. Instead of relying on heuristic measures, we may directly select the split candidate that reduces the overall loss of our tree. Consequently, any update of the model complexity can be directly linked to a change in the loss, providing better interpretability as described in Section \ref{sec:interpretability}.

Suppose we are at a leaf node of the tree. Let $S_t$ be the corresponding set of time indices observed at this leaf node. Our goal now is to find a new split candidate, i.e., a feature-value pair, to further split the observations. We can represent each split candidate by a set of time indices that would have been passed to the left child $C_t \subseteq S_t$ and the right child $\bar{C}_t = S_t \backslash C_t$. For the sake of illustration, we assume binary splits. However, our exposition can readily be extended to non-binary trees. Our goal is to select the split candidate that maximizes the improvement of the current loss:
\begin{align}
    C^*_t &= \argmax_{C_t}~ G_{S_t, C_t} \text{, with}\\
    G_{S_t, C_t} &= L(\ThetasubS, \YsubS, \XsubS) \nonumber \\ 
    &- L(\ThetasubC, \YsubC, \XsubC) - L(\ThetasubbarC, \YsubbarC, \XsubbarC) \label{eq:gain_leaf}
\end{align}
With \eqref{eq:gain_leaf}, the proof of \textit{consistency to parent splits} is almost trivial:
\begin{lemma}
    Every new split with a gain $G_{S_t, C_t} \geq 0$ due to \eqref{eq:gain_leaf} implies \textit{consistency with parent splits} (Property \ref{prop:conistency}).
\end{lemma}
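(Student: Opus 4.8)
The plan is to read Property~\ref{prop:conistency} with the instantiation $\Omega_t = S_t$, the index set of the leaf we are about to split, and then to show that the gain \eqref{eq:gain_leaf} is nothing but the difference between the pre-split and post-split estimated loss on that set. First I would note that before the split the node carries a single simple model, so the estimated loss on $\Omega_t = S_t$ is $L(\Omega_t) = L(\ThetasubS, \YsubS, \XsubS)$, which is exactly the leading term of $G_{S_t, C_t}$.

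Next I would characterise the post-split loss $L_C(\Omega_t)$. After performing the split, the leaf becomes an inner node that routes every observation in $S_t$ to exactly one of the two children according to membership in $C_t$ or in $\bar{C}_t = S_t \backslash C_t$. Because $\{C_t, \bar{C}_t\}$ partitions $S_t$ and the loss is additive over time indices by \eqref{eq:objective}, the estimated loss of the tree on $\Omega_t$ after the split equals the sum of the two children's losses, $L_C(\Omega_t) = L(\ThetasubC, \YsubC, \XsubC) + L(\ThetasubbarC, \YsubbarC, \XsubbarC)$. These are precisely the two subtracted terms of \eqref{eq:gain_leaf}, so by construction $G_{S_t, C_t} = L(\Omega_t) - L_C(\Omega_t)$. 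The hypothesis $G_{S_t, C_t} \geq 0$ then rearranges immediately to $L_C(\Omega_t) \leq L(\Omega_t)$, which is the defining inequality of \textit{consistency with parent splits}, completing the argument.

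The only step that deserves care --- and hence the main (though very mild) obstacle --- is justifying the identification $L_C(\Omega_t) = L(\ThetasubC, \YsubC, \XsubC) + L(\ThetasubbarC, \YsubbarC, \XsubbarC)$. This relies on two facts: the partition property of a binary split, which guarantees each observation contributes to exactly one child, and the additivity of $L$ over observations from \eqref{eq:objective}. It is also worth making explicit that each child is fitted with its own parameters on its own index set, so the two terms are the genuinely minimized child losses rather than the original model $\ThetasubS$ merely restricted to the two subsets; this is what makes a nonnegative gain attainable in the first place. Once this bookkeeping is settled, no inequality or estimation argument remains, which is why the statement is described as almost trivial.
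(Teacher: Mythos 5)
Your proof is correct and follows essentially the same route as the paper: both arguments rest on the additivity of the loss over the partition induced by the split, so that the gain \eqref{eq:gain_leaf} is exactly the pre-split minus post-split loss, and $G_{S_t,C_t}\geq 0$ rearranges to the defining inequality of Property \ref{prop:conistency}. The only (immaterial) difference is that you instantiate $\Omega_t = S_t$, while the paper takes $\Omega_t$ to be the full tree's index set and writes $L(\Omega_t)$ as a sum over all leaves, noting that the split leaves every other leaf's contribution unchanged so that $L_C(\Omega_t) = L(\Omega_t) - G_{S_t,C_t}$.
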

\begin{proof}
    The loss of a Dynamic Model Tree at time step $t$ corresponds to the sum of losses at each leaf node, i.e. $L(\Omega_t) = \sum_{J_t \subseteq \Omega_t} L(\Theta_{J_t}, Y_{J_t}, X_{J_t})$, where every set $J_t$ represents a leaf node. Suppose there exists a leaf node $S_t$, such that $G_{S_t, C_t} \geq 0$ for some split candidate $C_t$. A split on $C_t$ corresponds to a new loss $L_C(\Omega_t) = L(\Omega_t) - G_{S_t, C_t}$, which implies $L_C(\Omega_t) \leq L(\Omega_t)$.
\end{proof}
To satisfy \textit{model minimality} (Property \ref{prop:minimality}), we also need to evaluate existing splits of the Dynamic Model Tree. Specifically, we may replace an existing inner node with either a new split candidate or a leaf node. In both cases, we would prune the old branch (subtree). Suppose there is a subtree whose root corresponds to an inner node of the original tree. As before, we represent this inner node by a set of time indices $I_t$. Likewise, each leaf node of the subtree is represented by a set $J_t$, such that the union of all $J_t$ is equal to $I_t$. We then try to find an alternate split candidate (represented by $C_t \subseteq I_t$, $\bar{C}_t = I_t \backslash C_t$), i.e. a substitute for the inner node $I_t$, which offers an improvement in terms of the loss:
\begin{align}
    G_{I_t, C_t} &= \sum_{J_t \subseteq I_t} L(\Theta_{J_t}, Y_{J_t}, X_{J_t}) \nonumber\\
    &- L(\ThetasubC, \YsubC, \XsubC) - L(\ThetasubbarC, \YsubbarC, \XsubbarC)\label{eq:gain_inner}
\end{align}
If the gain \eqref{eq:gain_inner} is positive, we can prune the old subtree and add a new inner node with two new leaf nodes in its place. Alternatively, we may make the current inner node a leaf. To this end, we need to compare the loss at the inner node with the loss of the current subtree. The corresponding gain is
\begin{equation}
    G_{I_t} = \sum_{J_t \subseteq I_t} L(\Theta_{J_t}, Y_{J_t}, X_{J_t}) - L(\ThetasubI, \YsubI, \XsubI) \label{eq:gain_prune}.
\end{equation}
If both gains \eqref{eq:gain_inner} and \eqref{eq:gain_prune} are positive and $G_{I_t} \geq G_{I_t, C_t}$, we apply the second option, replacing the inner node with a leaf node, to obtain the overall smaller tree. Notably, \eqref{eq:gain_inner} and \eqref{eq:gain_prune} allow us to maintain the minimality of a Dynamic Model Tree:
\begin{lemma}\label{lemma:minimality}
    Greedy replacement of inner nodes, wherever $G_{I_t, C_t} \geq 0$ due to \eqref{eq:gain_inner} or $G_{I_t} \geq 0$ due to \eqref{eq:gain_prune}, implies \textit{model minimality} (Property \ref{prop:minimality}).
\end{lemma}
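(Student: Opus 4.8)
\section*{Proof proposal for Lemma \ref{lemma:minimality}}

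The plan is to reduce \textit{model minimality} to the same additive loss decomposition used in the previous lemma and then to exploit the fact that both gain functions are defined with a \emph{non-strict} inequality. First I would recall that the loss splits additively over leaves, $L(\Omega_t) = \sum_{J_t \subseteq \Omega_t} L(\Theta_{J_t}, Y_{J_t}, X_{J_t})$, so that the contribution of the subtree rooted at an inner node $I_t$ is exactly $\sum_{J_t \subseteq I_t} L(\Theta_{J_t}, Y_{J_t}, X_{J_t})$, where the index sets $J_t$ are pairwise disjoint. With this, each gain equals the exact change in the total loss caused by the corresponding local replacement: collapsing the subtree to the single leaf model $M^{I}$ changes the total loss by $-G_{I_t}$ via \eqref{eq:gain_prune}, and replacing it by a single binary split $(C_t,\bar{C}_t)$ changes it by $-G_{I_t, C_t}$ via \eqref{eq:gain_inner}. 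I would also note that each replacement structure has strictly fewer parameters than any non-trivial subtree it supersedes: the collapsed leaf carries one node, the single split carries one inner node and two leaves, whereas a prunable subtree contains at least one further node.

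The crux is then the boundary case of the $\geq 0$ threshold. Whenever $G_{I_t} \geq 0$, collapsing to a leaf does not increase the loss; in the tie $G_{I_t} = 0$ the loss is preserved exactly while the parameter count strictly drops, which is precisely the hypothesis $L(\Omega_t) = L_{alt}(\Omega_t)$ of Property \ref{prop:minimality}, and the greedy rule adopts the smaller structure. The same reasoning applies to \eqref{eq:gain_inner}. The tie-breaking condition---preferring the leaf replacement when both gains are non-negative and $G_{I_t} \geq G_{I_t, C_t}$---guarantees that among equally accurate local alternatives the algorithm always retains the one with the fewest parameters.

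It remains to lift this local statement to the whole tree, which I would do via a fixed-point argument. At a state where no further replacement is triggered, every collapse gain satisfies $G_{I_t} < 0$ and every re-split gain satisfies $G_{I_t, C_t} < 0$. Any strictly smaller pruned subtree is obtained by collapsing a set of pairwise-disjoint \emph{maximal} inner nodes (no collapsed node lies strictly below another). By the additivity of the loss over the disjoint leaf index sets, the total loss change equals $-\sum_i G_{I_t^{(i)}}$ over these maximal nodes, and since each summand is strictly negative this change is strictly positive. Hence every smaller pruned subtree has strictly larger loss, so no equal-loss smaller alternative can survive: the hypothesis $L(\Omega_t) = L_{alt}(\Omega_t)$ is met only by the retained tree itself, and Property \ref{prop:minimality} holds.

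The hard part will be exactly this global lifting step: making the passage from equality of \emph{total} losses to the behaviour of the \emph{local} gains watertight, so that a loss preserved overall cannot be the result of opposite-sign loss changes compensating across different branches. I would close this gap by insisting on the disjointness of the sets $J_t$ and decomposing any pruning into collapses at pairwise-disjoint maximal inner nodes, after which the per-node gains simply sum. A secondary subtlety I would flag explicitly is the degenerate replacement of a single split by another single split of equal parameter count, where Property \ref{prop:minimality} is vacuous and only \textit{consistency with parent splits} is at stake.
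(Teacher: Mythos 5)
Your core argument is the same as the paper's: the gain is the exact change in the additively decomposed loss, so $L_{alt}(\Omega_t) = L(\Omega_t) - G$, equality of losses forces $G = 0$, and the non-strict threshold $G \geq 0$ then makes the greedy rule adopt the structure with no more parameters. Where you differ is in the global step: the paper disposes of it with a single sentence ("this procedure may be repeated from the bottom to the root of the tree"), whereas you make it precise with a fixed-point argument -- at a state where no replacement fires, every gain is strictly negative, and any pruned subtree decomposes into collapses at pairwise-disjoint maximal inner nodes whose gains sum, so every strictly smaller alternative has strictly larger loss and the equal-loss hypothesis of Property~\ref{prop:minimality} is met only by the retained tree. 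This is a genuine tightening that closes the cross-branch cancellation worry the paper never addresses; it relies on the fact that the DMT keeps a trained simple model at each inner node, so the collapsed-leaf loss in \eqref{eq:gain_prune} is well defined, which your argument implicitly uses and which holds here. Your flagged degenerate case (replacing one split by another of equal size via \eqref{eq:gain_inner}) is also handled in the paper, which only claims an ``equal or lower'' parameter count, so both treatments agree that Property~\ref{prop:minimality} is vacuous there.
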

\begin{proof}
    Let $I_t$ represent an inner node of the Dynamic Model Tree. There exists a subtree whose root is the inner node $I_t$. We may prune this subtree by replacing the inner node $I_t$ with a different split candidate or a leaf. The gain $G$ corresponds to \eqref{eq:gain_inner} or \eqref{eq:gain_prune} respectively. Accordingly, $L_{alt}(\Omega_t) = L(\Omega_t) - G$ is the loss of the potential alternate tree with the subtree replaced. Note that the alternate tree is guaranteed to have an equal or lower number of nodes and, since the number of parameters per node is fixed, an equal or lower number of parameters. Since $L_{alt}(\Omega_t) = L(\Omega_t)$ implies that $G = 0$, by assumption we would replace the Dynamic Model Tree by the alternate tree with the smaller number of parameters. This procedure may be repeated from the bottom to the root of the tree.
\end{proof}

\subsection{Candidate Loss Approximation}\label{sec:approx_gains}
To compare the gains \eqref{eq:gain_leaf} or \eqref{eq:gain_inner} of different split candidates, we require loss estimates $L(\ThetasubC, \YsubC, \XsubC)$ and $L(\ThetasubbarC, \YsubbarC, \XsubbarC)$ for each candidate. However, due to limited resources, we usually cannot train the simple models corresponding to every potential split candidate. For this purpose, we adopt an efficient gradient-based approximation.

The authors in \cite{broelemann2018gradient} argue that we may warm-start optimizing the parameters of a split candidate $\ThetasubC$ with a single gradient step on the parameters of the current node $\ThetasubS$:
\begin{equation}\label{eq:initial_guess_param}
    \ThetasubC \approx \ThetasubS - \frac{\lambda}{|C_t|} \nabla_{\ThetasubS} L(\ThetasubS, \YsubC, \XsubC)
\end{equation}
The first order Taylor polynomial at the point $\ThetasubS$ then gives a good approximation to the loss of the split candidate $L(\ThetasubC, \YsubC, \XsubC)$. Accordingly, we write
\begin{align}\label{eq:approx_loss}
    L(\Theta, \YsubC, \XsubC) &\approx L(\ThetasubS, \YsubC, \XsubC) \nonumber \\
    &+ (\Theta - \ThetasubS)^T \nabla_{\ThetasubS} L(\ThetasubS, \YsubC, \XsubC) \nonumber \\
    \overset{\eqref{eq:initial_guess_param}}{\Rightarrow} L(\ThetasubC, \YsubC, \XsubC) &\approx L(\ThetasubS, \YsubC, \XsubC) \nonumber \\
    &- \frac{\lambda}{|C_t|} \| \nabla_{\ThetasubS} L(\ThetasubS, \YsubC, \XsubC) \|^2_2.
\end{align}
With \eqref{eq:approx_loss} we can approximate the loss of different split candidates without maintaining corresponding simple models. Moreover, we can reuse the gradient calculated during the optimisation of the parent model, which further increases efficiency. Finally, note that other work has successfully used gradient-based split finding \cite{gouk2019stochastic}.

\begin{algorithm}[t]
\caption{Dynamic Model Tree - General Update Procedure at a Leaf Node at Time Step $t$}
\label{alg:pseudo_code}
\begin{algorithmic}[1]
\renewcommand{\algorithmicrequire}{\textbf{Input:}}
\renewcommand{\algorithmicensure}{\textbf{Output:}}
\REQUIRE Observations and labels $X_t,Y_t$; Simple model $M_{\theta_t}$; Likelihoods, gradients and counts of time step $t-1$.
\ENSURE  Updated likelihoods, gradients and counts.
\\ \textit{***~Increment the loss, gradient and count at the node.~***}
\STATE $L(\ThetasubS,\YsubS,\XsubS) \leftarrow$ \\\hspace{2.3cm} $L(\Theta_{S_{t-1}},Y_{S_{t-1}},X_{S_{t-1}}) + L(\theta_t, Y_t, X_t)$
\STATE $\nabla_{\ThetasubS} L(\ThetasubS, \YsubS, \XsubS) \leftarrow$ \\\hspace{0.6cm} $\nabla_{\Theta_{S_{t-1}}} L(\Theta_{S_{t-1}}, Y_{S_{t-1}}, X_{S_{t-1}}) + \nabla_{\theta_t} L(\theta_t, Y_t, X_t)$
\STATE $n_{S_t} \leftarrow n_{S_{t-1}} + len(Y_t)$
\\ \textit{***~Update the statistics of split candidates and compute the gains (NOTE: The right child statistics corresponding to the set $\bar{C}_t$ can be obtained as the difference between the statistics of the left child ($C_t$) and the parent node ($S_t$). They therefore do not need to be stored separately.)-~***}
\STATE $G_{\max} \leftarrow -1$
\STATE $C_{\text{top}} \leftarrow \text{None}$
\FOR {all split candidates $C$}
    \STATE $Y^C_t \subseteq Y_t;~ X_t^C \subseteq X_t$
    \STATE $L(\ThetasubS,\YsubC,\XsubC) \leftarrow$ \\\hspace{1.6cm} $L(\Theta_{S_{t-1}},Y_{C_{t-1}},X_{C_{t-1}}) + L(\theta_t, Y_t^C, X_t^C)$
    \STATE $\nabla_{\ThetasubS} L(\ThetasubS, \YsubC, \XsubC) \leftarrow$ \\ $\nabla_{\Theta_{S_{t-1}}} L(\Theta_{S_{t-1}}, Y_{C_{t-1}}, X_{C_{t-1}}) + \nabla_{\theta_t} L(\theta_t, Y^C_t, X^C_t)$
    \STATE $n_{C_t} \leftarrow n_{C_{t-1}} + len(Y^C_t)$
    \item[]
    \STATE $L(\ThetasubC,\YsubC,\XsubC) \leftarrow \eqref{eq:approx_loss}$
    \STATE $G_{S_t, C_t} \leftarrow \eqref{eq:gain_leaf}$
    \IF {($G_{S_t, C_t} > G_{\max}$)}
        \STATE $G_{\max} \leftarrow G_{S_t, C_t}$
        \STATE $C_{\text{top}} \leftarrow C$
    \ENDIF
\ENDFOR
\\ \textit{***~Split or retain the leaf node.~***}
\IF {$G_{\max} \geq 0$}
    \STATE Split on candidate $C_{\text{top}}$
\ENDIF
\end{algorithmic}
\end{algorithm}
\subsection{Basic Algorithm And Complexity}\label{sec:complexity}
Algorithm \ref{alg:pseudo_code} depicts the general procedure at a leaf node of the Dynamic Model Tree. For inner nodes, we compute the gain functions \eqref{eq:gain_inner} and \eqref{eq:gain_prune} in line 12. In line 19, we then replace the inner node with a new split or a leaf (depending on which gain is greater). Otherwise, the general update procedure is equivalent for both types of nodes. We update the nodes of the tree in a bottom-up fashion.

The time complexity of Algorithm \ref{alg:pseudo_code} for updating one node without fitting the simple model is $\mathcal{O}(m n_t c + m^2 v c)$, where $c$ is the number of classes, $m$ is the number of features, $n_t$ is the sample size at time step $t$ and $v$ is the maximal number of unique values of a feature. Depending on the choice of simple model, the time complexity might increase. If the maximal number of unique values is large, i.e. $v \gg n_t$, then the first term becomes negligible, leading to a complexity of $\mathcal{O}(m^2 v c)$. In practice, decision tree algorithms often reduce computation time by limiting the number of eligible split candidates. This can be particularly important when we deal with large numbers of (continuous) features. We propose a simple method in Section \ref{sec:implementation}.

The memory complexity per node of the Dynamic Model Tree is $\mathcal{O}(m^2vc)$. As before, the memory requirements of the Dynamic Model Tree scale with the number of split candidates considered.

\subsection{Differences Between DMT and Earlier Methods}\label{sec:diff_fimt}
Dynamic Model Trees differ clearly from earlier work. A major difference lies in the way Dynamic Model Trees handle concept drift. While purity-based adaptation strategies usually require dedicated drift detection models to identify concept drift \cite{haug2021learning}, a Dynamic Model Tree does not. In fact, adaptation to concept drift is automatically handled via the proposed gain functions. As a consequence, Dynamic Model Trees only have few hyperparameters that need to be optimized, while providing a similar level of flexibility as earlier works.

FIMT-DD is one of the most popular existing Model Tree frameworks for data streams \cite{ikonomovska2011learning}. In the following, we briefly highlight key differences between FIMT-DD and the Dynamic Model Tree.
Like a Hoeffding Tree, FIMT-DD relies on a purity measure (Standard Deviation Reduction) and Hoeffding's inequality to compare split candidates. That is, in FIMT-DD, ``the process of learning linear models in the leaves will not explicitly reduce the size of the (...) tree'' \cite{ikonomovska2011learning}. In addition, FIMT-DD requires a dedicated concept drift detection method (Page Hinkley) to adapt to change. As mentioned before, the Dynamic Model Tree neither requires a heuristic measure nor a separate concept drift detection model.

Other than FIMT-DD, the Dynamic Model Tree continues to update the simple models at the inner nodes even after splitting. This may increase the computation time, but allows us to compute the loss concerning the active concept on different hierarchies. In this way, the proposed framework can effectively identify and adjust to global and local concept drift.

\subsection{Limitations}\label{sec:limitations}
Typically, incremental decision trees like VFDT \cite{domingos2000mining} or FIMT-DD \cite{ikonomovska2011learning} primarily occupy memory for saving statistics in the leaf nodes. Dynamic Model Trees also require memory to store statistics for every inner node. For example, while VFDT occupies $\mathcal{O}(lmvc)$ memory, a Dynamic Model Tree requires $\mathcal{O}((l+i)m^2vc)$, where $l$ and $i$ are the number of leaf and inner nodes, $m$ is the number of features, $v$ is the maximal number of unique values per feature and $c$ is the number of classes. However, Dynamic Model Trees usually remain shallow due to the \textit{model minimality} property, which reduces the overall computational gap to other methods.

Likewise, Dynamic Model Trees can have a longer training time per node, depending on the selected simple model type. The choice of appropriate simple models also affects the general performance of the tree. With random initial weights, a simple model may take some time to achieve good predictive quality. However, this mainly affects the root node of the Dynamic Model Tree, since all other simple models are warm-started with the optimized parameters of the parent node. In addition, if the simple models are non-robust or biased, the split and prune decisions of the proposed framework will suffer. In general, however, inadequate model types can be quickly identified by comparing the predictive error to benchmarks (e.g. the VFDT).

\section{Implementation}\label{sec:implementation}
The Dynamic Model Tree offers a large degree of flexibility. In particular, our framework may be implemented with different simple models and loss functions to account for different applications. For illustration, we propose an effective implementation of the Dynamic Model Tree for binary and multi-class classification.

\subsection{Simple Models}
We use logit and multinomial logit models (softmax) to represent binary and categorical target variables, respectively. Both models belong to the family of Generalized Linear Models (GLM) and are widely used in practice due to their efficiency and transparency. We train the simple models by stochastic gradient descent with a constant learning rate. In the future, one might experiment with different base models, optimization strategies or online feature selection \cite{haug2020leveraging}.

\subsection{Loss Function}
Owing to the proposed gain functions, changes in a Dynamic Model Tree are directly linked to changes in the empirical loss. Although purity-based splits usually also lead to a reduction in error, splits based on a change in loss can be very powerful in terms of interpretability.

To this end, we recall that concept drift between two time steps $t_1$ and $t_2$ corresponds to a change in the active concept, i.e. $P_{t_1}(X, Y) \neq P_{t_2}(X, Y)$. Online learning models need to adjust to concept drift in order to maintain high predictive performance. Accordingly, we are mainly interested in concept drift that shifts the optimal decision boundary. This form of concept drift is called real concept drift and is defined as $P_{t_1}(Y|X) \neq P_{t_2}(Y|X)$ \cite{lu2018learning}. Since the true distribution $P_t(Y|X)$ is generally unknown, our best approximation of the active data concept is the likelihood $P(Y_t|X_t,\theta_t)$ \cite{haug2021learning}. In this context, the negative log-likelihood $L(\ThetasubS, \YsubS, \XsubS) = -\sum_{t \in S_t} \log P(Y_t|X_t,\theta_t)$ is a straight-forward choice for the loss function.

If a simple model performs well, we can generally assume that the likelihood is a good approximation of the data-generating concept. Accordingly, we may assume that the negative log-likelihood loss changes as a consequence of concept drift. For this reason, the negative log-likelihood loss allows us to associate any (major) change in the gains \eqref{eq:gain_leaf}-\eqref{eq:gain_prune} with a local change in the approximate data concept. Compared to popular purity measures, this enables a much higher degree of online interpretability, as discussed in the introduction.

\subsection{Threshold for Robust Model Updates}
In practice, an online learning model will be subject to small variations and noise. It may therefore be useful to specify a threshold on the gain functions to avoid excessive updates. 

If we set a threshold for the gains defined in \eqref{eq:gain_inner} and \eqref{eq:gain_prune}, we need to relax the \textit{model minimality} (Property \ref{prop:minimality}): We recall that the loss of a minimal alternate model is given by $L_{alt}(\Omega_t) = L(\Omega_t) - G$, where $L(\Omega_t)$ is the loss of the current tree and $G$ corresponds to \eqref{eq:gain_inner} or \eqref{eq:gain_prune} (see Lemma \ref{lemma:minimality} and Proof). Consequently, if we prune the inner node whenever $G \geq threshold \geq 0$, we retain the minimal model for $L_{alt}(\Omega_t) \leq L(\Omega_t) - threshold$. This relaxation can sometimes be sensible, since a non-robust tree may be equally undesirable than an overly complex tree. Besides, if the threshold is reasonably small, changes of the loss due to concept drift will usually trigger model updates after a few iterations. To set a threshold, one only needs to adjust line 18 of the basic procedure shown in Algorithm \ref{alg:pseudo_code}.

By using the negative log-likelihood loss, we enable a natural threshold in terms of the Akaike Information Criterion:
\begin{equation}
    AIC = 2k - 2 \ell(\Theta),
\end{equation}
where $\ell$ is the log-likelihood and $k$ is the number of free (estimated) parameters. The AIC is a popular test statistic for model selection problems. It estimates the relative amount of information lost among competing models. Given two models $i$ and $j$ where $AIC_i \leq AIC_j$, the quantity $\exp([AIC_i - AIC_j]/2)$ is proportional to the relative probability that model $j$ minimizes the estimated information loss. Therefore, if we set a threshold for this quantity, we can control the tolerated probability that model $j$ actually has the minimum AIC instead of model $i$. 

We can apply this methodology to our split and prune strategy. For example, when attempting to split, we compare the simple models representing the current node ($S_t$) and the potential split ($C_t$, $\bar{C}_t$). The corresponding AICs are
\begin{align}
    AIC_{S_t} &= 2k_{S_t}  - 2 \ell(\ThetasubS, \YsubS, \XsubS), \\
    AIC_{C_t} &= 2 (k_{C_t} + k_{\bar{C}_t}) \nonumber \\
    &- 2 \big(\ell(\ThetasubC, \YsubC, \XsubC) + \ell(\ThetasubbarC, \YsubbarC, \XsubbarC)\big),
\end{align}
where $k_{S_t}$, $k_{C_t}$ and $k_{\bar{C}_t}$ denote the numbers of free parameters of the corresponding models. Let $\epsilon \in [0,1]$ be a user-specified hyperparameter. We apply the following test:
\begin{align}\label{eq:threshold_leaf}
    &\exp([AIC_{C_t} - AIC_{S_t}] / 2) \leq \epsilon \nonumber \\
    \Leftrightarrow~ &\exp\big(k_{C_t} + k_{\bar{C}_t} - \ell(\ThetasubC, \YsubC, \XsubC) - \ell(\ThetasubbarC, \YsubbarC, \XsubbarC) \nonumber \\
    &~~~- k_{S_t}  + \ell(\ThetasubS, \YsubS, \XsubS) \big) \leq \epsilon \nonumber \\
    \overset{\eqref{eq:gain_leaf}}{\Leftrightarrow}~ &\exp\big(k_{C_t} + k_{\bar{C}_t} - k_{S_t} - G_{S_t,C_t} \big) \leq \epsilon \nonumber \\
    \Leftrightarrow~ &\exp(-G_{S_t,C_t}) \leq \frac{\epsilon}{\exp(k_{C_t} + k_{\bar{C}_t} - k_{S_t})} \nonumber \\
    \Leftrightarrow~ &-G_{S_t,C_t} \leq \log(\epsilon) - k_{C_t} - k_{\bar{C}_t} + k_{S_t} \nonumber \\
    \Leftrightarrow~ &G_{S_t,C_t} \geq k_{C_t} + k_{\bar{C}_t} - k_{S_t} - \log(\epsilon)
\end{align}
If we use the same simple model type at every node (e.g. logit models as proposed earlier), then \eqref{eq:threshold_leaf} simplifies to $G_{S_t,C_t} \geq k - \log(\epsilon)$. Similarly, we can calculate thresholds for the remaining gain functions, which we omit for brevity. Note that the hyperparameter $\epsilon$ controls the trade-off between quick and robust updates. In this way, we can adjust the sensitivity of the Dynamic Model Tree.

\subsection{Algorithmic Considerations}
We implemented the Dynamic Model Tree in Python.\footnote{\url{https://github.com/haugjo/dynamic-model-tree}} Note that the Dynamic Model Tree is able to handle both batch-incremental and instance-incremental online learning. In the following, we discuss important algorithmic details and propose a sensible hyperparameter configuration.

In practice, the number of unique split candidates may grow quickly -- in particular for continuous variables. This is a problem that most incremental decision trees have in common. To overcome potential memory overload, our framework may be extended with advanced strategies like Binary Search Trees (see their application in FIMT-DD \cite{ikonomovska2011learning}, for example). For illustration, however, we have chosen a simpler technique. 

Specifically, we store only a fixed number of statistics corresponding to the candidates with largest estimated gain (we recommend a default value of three times the number of features). At every time step, we allow a fixed percentage of the saved candidate statistics to be replaced by newly observed candidates. This is similar to the VFDT algorithm \cite{domingos2000mining}, which drops split candidates that diverge too far from the current maximal gain. We recommend a default replacement rate of 50\%, which provided good results throughout all our experiments.

Since we limit the number of split candidates in main memory, we need to approximate the gain of newly observed candidates from the current sample. Note that the initial approximation can be biased if the current batch is not representative of the active concept. Specifically, such initial bias might occur if the batch size is small or the data is very noisy. Once stored, however, the statistics are updated at each successive time step, mitigating any initial bias over time. In addition, a split candidate that was rejected or deleted in the past can be added again in the future, e.g. if its importance has changed after concept drift. In experiments, we obtained good results for this simple approximation scheme.

Additionally, we propose a learning rate of 0.05 to train the binary and multinomial logit models and a threshold of $\epsilon = 10e-8$ for the AIC-based confidence test.

Finally, note that we might be able to improve the efficiency of the Dynamic Model Tree by using parallelization or distributed computation. We leave a detailed discussion of more advanced implementation techniques for future work.

\begin{table}[t]
\caption{\emph{Data sets}. We used state-of-the-art tabular streaming data sets with different types of concept drift. TüEyeQ \cite{kasneci2021tueyeq}, as well as Insects-Abrupt and Insects-Incremental \cite{souza2020challenges} have been obtained from the sources referenced in the respective papers. The remaining real-world data sets have been obtained from \textit{https://www.openml.org}. We included the original reference wherever available. The synthetic data sets have been generated with \textit{scikit-multiflow} \cite{montiel2018scikit}. Here we also indicate the type of concept drift generated (abrupt or incremental).}
    \label{tab:datasets}
    \centering
    \begin{adjustbox}{max width=\columnwidth}
        \begin{tabular}{llll}
        \toprule
        \textbf{Name} & \textbf{\#Samples} & \textbf{\#Features} & \textbf{\#Classes (\#Majority)} \\ 
        \cmidrule(lr){1-1} \cmidrule(lr){2-4}
        Electricity & 45,312 & 8 & 2 -- (26,075) \\
        Airlines & 539,383 & 7 & 2 -- (299,119) \\
        Bank \cite{moro2011using} & 45,211 & 16 & 2 -- (39,922) \\
        TüEyeQ \cite{kasneci2021tueyeq} & 15,762 & 76 & 2 -- (12,975) \\
        Poker-Hand \cite{Dua2019} & 1,025,000 & 10 & 9 -- (513,701) \\
        KDDCup & 494,020 & 41 & 23 -- (280,790) \\
        Covertype \cite{Dua2019} & 581,012 & 54 & 7 -- (283,301) \\
        Gas \cite{vergara2012chemical} & 13,910 & 128 & 6 -- (3,009) \\
        Insects-Abrupt \cite{souza2020challenges} & 355,275 & 33 & 6 -- (101,256) \\
        Insects-Incremental \cite{souza2020challenges} & 452,044 & 33 & 6 -- (134,717) \\
        SEA (synthetic, abrupt) & 1,000,000 & 3 & 2 \\
        Agrawal (synthetic, incremental) & 1,000,000 & 9 & 2 \\
        Hyperplane (synthetic, incremental) & 500,000 & 50 & 2 \\
        \bottomrule
        \end{tabular}
    \end{adjustbox}
\end{table}

\begin{table*}[!ht]
\caption{\textit{F1 Measure (higher is better).} We show the mean and standard deviation of the F1 measures observed over time in all data sets. For reference, we also provide the results of two ensemble classifiers separated from the stand-alone models by horizontal lines. We highlight the top result of each data set in bold letters. The average performance across all data sets is shown in the rightmost column. Note that the standard deviation also captures the variation caused by concept drift. It should therefore not be taken as an indication of the robustness to noise. The proposed Dynamic Model Tree frequently outperforms the remaining classifiers in terms of the predictive power and performs best on average.}
    \label{tab:f1}
    \centering
    \begin{adjustbox}{max width=\textwidth}
        \begin{tabular}{lllllllllllllll}
        \toprule
        Model $\backslash$ Data Set &  Electricity &     Airlines & Bank &        TüEyeQ &        Poker &          KDD &    Covertype &          Gas & Insects-Abr. &  Insects-Inc. &          SEA &      Agrawal &   Hyperplane &         Mean \\
        \cmidrule(lr){1-1} \cmidrule(lr){2-14} \cmidrule(lr){15-15}
        DMT (ours)    &  0.76 ± 0.20 &  0.63 ± 0.05 &    \textbf{0.88 ± 0.11} &  \textbf{0.79 ± 0.20} &  0.44 ± 0.05 &  \textbf{0.99 ± 0.01} &  0.80 ± 0.09 &  \textbf{0.82 ± 0.27} &    \textbf{0.73 ± 0.10} &  \textbf{0.73 ± 0.08} &  0.88 ± 0.02 &  0.82 ± 0.08 &  \textbf{0.84 ± 0.04} &  \textbf{0.78 ± 0.10} \\
        FIMT-DD \cite{ikonomovska2011learning}  &  0.78 ± 0.20 &  0.55 ± 0.12 &    \textbf{0.88 ± 0.14} &  0.76 ± 0.22 &  0.41 ± 0.08 &  \textbf{0.99 ± 0.01} &  0.81 ± 0.10 &  0.79 ± 0.28 &   \textbf{0.73 ± 0.08} &  0.72 ± 0.08 &  0.78 ± 0.10 &  0.64 ± 0.13 &  0.76 ± 0.05 &  0.74 ± 0.12 \\
        VFDT (MC) \cite{domingos2000mining}  &  0.76 ± 0.20 &  0.64 ± 0.06 &    0.87 ± 0.15 &  0.77 ± 0.22 &  0.47 ± 0.05 &  0.96 ± 0.10 &  0.72 ± 0.13 &  0.29 ± 0.37 &    0.64 ± 0.14 &  0.67 ± 0.10 &  0.86 ± 0.03 &  0.77 ± 0.11 &  0.65 ± 0.03 &  0.70 ± 0.13 \\
        VFDT (NBA) \cite{gama2003accurate}  &  \textbf{0.80 ± 0.15} &  \textbf{0.65 ± 0.05} &    \textbf{0.88 ± 0.13} &  0.77 ± 0.21 &  \textbf{0.50 ± 0.03} &  \textbf{0.99 ± 0.01} &  \textbf{0.85 ± 0.09} &  0.77 ± 0.27 &    0.71 ± 0.10 &  0.72 ± 0.07 &  0.86 ± 0.04 &  0.79 ± 0.10 &  0.73 ± 0.02 &  0.77 ± 0.10 \\
        HT-ADA \cite{bifet2009adaptive}  &  0.77 ± 0.21 &  0.62 ± 0.07 &    \textbf{0.88 ± 0.13} &  0.77 ± 0.23 &  0.47 ± 0.05 &  0.96 ± 0.10 &  0.67 ± 0.19 &  0.22 ± 0.35 &    0.59 ± 0.15 &  0.64 ± 0.13 &  \textbf{0.89 ± 0.02} &  \textbf{0.84 ± 0.08} &  0.66 ± 0.03 &  0.69 ± 0.13 \\
        EFDT \cite{manapragada2018extremely}   &  0.77 ± 0.20 &  0.60 ± 0.09 &    \textbf{0.88 ± 0.14} &  0.77 ± 0.23 &  0.47 ± 0.05 &  \textbf{0.99 ± 0.01} &  0.74 ± 0.14 &  0.55 ± 0.39 &    0.68 ± 0.11 &  0.65 ± 0.10 &  0.87 ± 0.04 &  0.82 ± 0.09 &  0.69 ± 0.03 &  0.73 ± 0.12 \\
        \midrule\midrule
        Forest Ens. \cite{gomes2017adaptive}  &  \textbf{0.81 ± 0.14} &  0.64 ± 0.05 &    \textbf{0.89 ± 0.13} &  \textbf{0.78 ± 0.20} &  0.50 ± 0.02 &  \textbf{0.99 ± 0.01} &  \textbf{0.74 ± 0.19} &  \textbf{0.80 ± 0.33} &    0.72 ± 0.09 &  0.72 ± 0.08 &  \textbf{0.90 ± 0.02} &  0.80 ± 0.08 &  0.64 ± 0.03 &  0.76 ± 0.10 \\
        Bagging Ens. \cite{bifet2010leveraging} &  \textbf{0.81 ± 0.17} &  \textbf{0.65 ± 0.05} &    \textbf{0.89 ± 0.13} &  \textbf{0.78 ± 0.21} & \textbf{0.53 ± 0.03} &  \textbf{0.99 ± 0.04} &  0.72 ± 0.23 &  0.67 ± 0.40 &    \textbf{0.74 ± 0.10} &  \textbf{0.75 ± 0.07} &  \textbf{0.90 ± 0.02} &  \textbf{0.84 ± 0.08} &  \textbf{0.72 ± 0.04} &  \textbf{0.77 ± 0.12} \\
        \bottomrule
        \end{tabular}
    \end{adjustbox}
\end{table*}

\begin{table*}[!ht]
\caption{\textit{No. of Splits (lower is better).} Complexity -- quantified here by the mean and standard deviation of the number of splits (as described in Section \ref{sec:interpretability_measure}) -- is often used as an indicator of the interpretability of a model. Model Trees (FIMT-DD and DMT) tend to remain shallower than Hoeffding Trees, due to the flexibility provided by the linear leaf models.}
    \label{tab:n_splits}
    \centering
    \begin{adjustbox}{max width=\textwidth}
        \begin{tabular}{lllllllllllllll}
        \toprule
        Model $\backslash$ Data Set &  Electricity &     Airlines & Bank &        TüEyeQ &        Poker &          KDD &    Covertype &          Gas & Insects-Abr. &  Insects-Inc. &          SEA &      Agrawal &   Hyperplane &         Mean \\
        \cmidrule(lr){1-1} \cmidrule(lr){2-14} \cmidrule(lr){15-15}
        DMT (ours)     &    6.5 ± 3.1 &    35.7 ± 16.7 &      \textbf{2.3 ± 1.0} &    1.4 ± 0.8 &     \textbf{ 9.0 ± 0.0} &     24.8 ± 6.3 &       10.7 ± 4.0 &   9.3 ± 3.5 &      9.1 ± 3.5 &      \textbf{9.1 ± 3.5} &     35.1 ± 25.3 &     75.4 ± 34.4 &      \textbf{2.2 ± 1.3} &     \textbf{17.7 ± 8.0} \\
        FIMT-DD \cite{ikonomovska2011learning} &  52.0 ± 30.1 &      \textbf{4.9 ± 3.9} &    75.5 ± 47.3 &    \textbf{1.0 ± 0.0} &    17.7 ± 10.2 &     24.8 ± 6.4 &       13.7 ± 8.2 &   6.0 ± 0.0 &      \textbf{7.4 ± 3.1} &     10.6 ± 5.9 &       \textbf{1.0 ± 0.0} &     \textbf{65.8 ± 71.5} &     8.0 ± 10.3 &    22.2 ± 15.1 \\
        VFDT (MC) \cite{domingos2000mining}   &  37.8 ± 22.3 &  323.3 ± 182.4 &    21.9 ± 13.9 &   10.6 ± 6.8 &    84.7 ± 50.6 &    25.6 ± 13.0 &    356.8 ± 201.7 &   0.7 ± 0.7 &    41.3 ± 23.7 &    53.5 ± 32.5 &   588.4 ± 339.8 &   628.3 ± 371.0 &  277.9 ± 162.4 &  188.5 ± 109.3 \\
        VFDT (NBA) \cite{gama2003accurate}  &  76.7 ± 44.6 &  647.6 ± 364.7 &    44.8 ± 27.7 &  22.3 ± 13.7 &  856.3 ± 506.0 &  637.3 ± 310.8 &  2861.1 ± 1613.4 &  11.1 ± 5.1 &  295.2 ± 165.7 &  380.3 ± 227.6 &  1177.8 ± 679.7 &  1257.6 ± 742.1 &  556.8 ± 324.9 &  678.8 ± 386.6 \\
        HT-ADA \cite{bifet2009adaptive}  &    \textbf{3.4 ± 2.1} &     12.7 ± 6.8 &      5.6 ± 3.4 &    2.3 ± 1.6 &    58.0 ± 28.1 &    25.4 ± 12.8 &        \textbf{3.1 ± 2.9} &   \textbf{0.2 ± 0.4} &      8.0 ± 5.0 &    21.5 ± 12.9 &    131.4 ± 69.8 &    158.2 ± 79.2 &  188.7 ± 101.4 &    47.6 ± 25.1 \\
        EFDT \cite{manapragada2018extremely}   &   10.9 ± 4.5 &     15.2 ± 7.5 &      9.5 ± 3.4 &    2.8 ± 1.4 &     10.0 ± 6.6 &     \textbf{24.7 ± 9.2} &        9.4 ± 4.3 &   4.7 ± 2.7 &     17.3 ± 7.8 &    15.9 ± 10.4 &    109.9 ± 70.3 &     89.7 ± 66.2 &    31.0 ± 17.4 &    27.0 ± 16.3 \\
        \bottomrule
        \end{tabular}
    \end{adjustbox}
\end{table*}

\begin{table*}[!ht]
\caption{\textit{No. of Parameters (lower is better).} For the sake of completeness and to account for the difference between majority weighting and linear leaf models, we depict the number of parameters (mean ± standard deviation) as another measure of complexity (as described in Section \ref{sec:interpretability_measure}). In general, heuristic measures like the number of splits or parameters do not always give a clear indication of the interpretability of a model and should thus be considered with care. A more reliable indication of interpretability is provided by theoretical properties such as Property \ref{prop:conistency} and \ref{prop:minimality}.}
    \label{tab:n_param}
    \centering
    \begin{adjustbox}{max width=\textwidth}
        \begin{tabular}{lllllllllllllll}
        \toprule
        Model $\backslash$ Data Set &  Electricity &     Airlines & Bank &        TüEyeQ &        Poker &          KDD &    Covertype &          Gas & Insects-Abr. &  Insects-Inc. &          SEA &      Agrawal &   Hyperplane &         Mean \\
        \cmidrule(lr){1-1} \cmidrule(lr){2-14} \cmidrule(lr){15-15}
        DMT (ours)    &     33 ± 14 &     146 ± 67 &         27 ± 8 &    92 ± 31 &       80 ± 0 &      970 ± 238 &       474 ± 162 &   939 ± 320 &       237 ± 82 &     238 ± 82 &      71 ± 51 &    381 ± 172 &        80 ± 33 &      290 ± 97 \\
        FIMT-DD \cite{ikonomovska2011learning}  &   238 ± 136 &      \textbf{22 ± 15} &      649 ± 402 &     76 ± 0 &     150 ± 83 &      971 ± 239 &       597 ± 332 &     640 ± 0 &       198 ± 74 &    275 ± 140 &        \textbf{3 ± 0} &    333 ± 358 &      229 ± 262 &     337 ± 157 \\
        VFDT (MC) \cite{domingos2000mining}  &     77 ± 45 &    648 ± 365 &        45 ± 28 &    22 ± 14 &    170 ± 101 &        52 ± 26 &       715 ± 403 &       2 ± 1 &        84 ± 47 &     108 ± 65 &   1178 ± 680 &   1258 ± 742 &      557 ± 325 &     378 ± 219 \\
        VFDT (NBA) \cite{gama2003accurate} &   349 ± 201 &  2,594 ± 1,459 &      388 ± 236 &  896 ± 526 &  6,943 ± 4,099 &  24,016 ± 11,695 &  116,270 ± 65,543 &  1,105 ± 470 &    7,023 ± 3,930 &  9,042 ± 5,397 &  2,357 ± 1,359 &  6,292 ± 3,710 &   14,224 ± 8,285 &  14,731 ± 8,224 \\
        HT-ADA \cite{bifet2009adaptive}  &       \textbf{8 ± 4} &      27 ± 14 &         \textbf{12 ± 7} &      \textbf{6 ± 3} &     144 ± 78 &        52 ± 26 &           \textbf{7 ± 6} &       \textbf{1 ± 1} &        \textbf{17 ± 10} &      44 ± 26 &    264 ± 140 &    377 ± 193 &      378 ± 203 &      103 ± 55 \\
        EFDT \cite{manapragada2018extremely}   &      23 ± 9 &      31 ± 15 &         20 ± 7 &      7 ± 3 &      \textbf{21 ± 13} &       \textbf{50 ± 18} &          20 ± 9 &      10 ± 5 &        36 ± 16 &      \textbf{33 ± 21} &    221 ± 141 &    \textbf{180 ± 132} &        \textbf{63 ± 35} &       \textbf{55 ± 33} \\
        \bottomrule
        \end{tabular}
    \end{adjustbox}
\end{table*}

\section{Experiments}\label{sec:experiments}
We evaluated the Dynamic Model Tree in multiple experiments on synthetic and real-world streaming classification data sets. Specifically, we compared the proposed framework to the related Model Tree architecture FIMT-DD \cite{ikonomovska2011learning} and different versions of the Hoeffding Tree. We begin with a description of the experimental setup, including the data sets, related methods and performance measures. Afterwards, we summarize our most important findings.

\subsection{Environment and Evaluation Strategy}
All models and experiments were implemented in Python (3.8.5) and run on an AMD Ryzen Threadripper 3960X (24x 3.8GHz) CPU with 128Gb RAM under Ubuntu 18.04. In addition, we used the following packages: numpy (1.20.1), pandas (1.2.4), matplotlib (3.4.2), scikit-learn (0.24.2) and scikit-multiflow (0.5.3). We specified a random state to guarantee the reproducibility of all results.

We performed a prequential (test-then-train) evaluation \cite{Gama2009}, which is the most common evaluation strategy for data stream learning. A disadvantage of data stream evaluations compared to regular batch evaluations is the lack of statistical significance. To be precise, since we cannot alter the order of observations without introducing artificial concept drift, we cannot obtain results for different permutations or samples of the data set. There are approaches where multiple instances of a classifier are trained in parallel \cite{Bifet2015}. However, they are very computationally intensive. Accordingly, we ask readers to be aware that statistical significance, although being standard in other areas of machine learning, is uncommon in the data stream literature.

At each iteration of the prequential evaluation, we processed a batch of $0.1\%$ of the data. We also examined other batch sizes to ensure that the reported results are representative.

\subsection{Data Sets}
Typically, online classifiers are evaluated on tabular data sets. Machine learning with heterogeneous and evolving tabular data is challenging and has recently attracted attention in other areas such as deep learning \cite{borisov2021deep}. In our experiments, we used state-of-the-art tabular streaming data sets, which we briefly describe in the following. We obtained most real-world data sets from \textit{https://www.openml.org}. A summary of the data sets and their properties can also be found in Table \ref{tab:datasets}.

The Electricity data set describes price changes in the Australian New South Wales Electricity Market. The prices are not fixed, but adjust over time to the varying supply and demand. In the Airlines data set, the goal is to predict whether a flight will be delayed, given information about its scheduled departure. The Bank Marketing data set incorporates information about a marketing campaign of a Portuguese bank institute \cite{moro2011using}. Here, the goal is to predict whether a customer will subscribe a deposit. Poker-Hand is a popular multiclass classification data set that consists of variables describing different poker hands \cite{Dua2019}. Covertype contains information about several forest cover types that need to be distinguished \cite{Dua2019}. The Gas data set contains drifting measurements of chemical sensors that are used to classify different types of gas \cite{vergara2012chemical}. The KDD Cup 1999 data set was introduced as part of a data mining competition. The data set contains features about network connections that are used for intrusion detection. We shuffled the KDD data set, because it was initially grouped by class labels. Since KDD does not involve known concept drift, shuffling the data is required to obtain an even distribution of classes over time and enable a fair evaluation.

It is usually difficult to determine the exact period of concept drift in a real-world streaming process. In fact, we cannot access such information for any of the above-mentioned data sets. Two recent exceptions are the TüEyeQ \cite{kasneci2021tueyeq} and Insects \cite{souza2020challenges} data collections. From TüEyeQ, we used the sociodemographic data about all subjects participating in an IQ test. The classification task is to decide whether a subject fails or passes an IQ-related task. The data set is divided in four task blocks with increasing difficulty within each block, resembling a natural concept drift. The Insects data comprises sensor information from monitoring of flying insect species. The measurements were obtained in a non-stationary but controllable environment. That is, by changing the temperature and humidity, the authors in \cite{souza2020challenges} were able to generate different types of concept drift. We used the imbalanced Insects data sets with abrupt and incremental drift.

In addition, we created synthetic data streams with \textit{scikit-multiflow} \cite{montiel2018scikit}. Specifically, we used the AGRAWALGenerator, HyperplaneGenerator and SEAGenerator to obtain synthetic data with different types of concept drift. For detailed information about each data generator, we refer to the corresponding documentation. Each synthetic data stream was sampled with 0.1 probability of noisy inputs (this corresponds to the ``perturbation'' parameter of the scikit-multiflow classes).

The resulting Hyperplane data set is subject to a continuous incremental concept drift over all observations. The Agrawal data set contains incremental drift between the observations 100,000-200,000, 300,000-500,000 and 800,000-900,000, but is otherwise stable. The SEA data set has three abrupt concept drifts at the observations 200,000, 400,000, 600,000 and 800,000.

Finally, we factorised the categorical string variables of all data sets. In addition, we normalized the features before use (range $[0,1]$). Otherwise, we did not pre-process the data sets.

\subsection{Related Algorithms and Hyperparameters}
As mentioned before, we compared the Dynamic Model Tree to different versions of the Hoeffding Tree. Specifically, we obtained results for the basic VFDT \cite{domingos2000mining} and two of its extensions, the adaptive Hoeffding Tree (HT-Ada) \cite{bifet2009adaptive} and the Extremely Fast Decision Tree (EFDT) \cite{manapragada2018extremely}. Unlike VFDT, both extensions contain a mechanism to adapt to concept drift.

Since it is generally not possible to optimize hyperparameters in a data stream, we applied the default configurations suggested by the corresponding scikit-multiflow implementations. These implementations have been heavily optimized over the years. Since our goal was to compare the originally proposed models, and in order to allow a fairer comparison with our implementation, we disabled some of the optimizations of scikit-multiflow. In particular, we did not use bootstrap sampling in the leaves of the HT-Ada algorithm. Moreover, we used majority voting in the leaf nodes of the Hoeffding Trees. However, to give an indication of the possible improvement introduced by simple predictive models in the leaves of a Hoeffding Tree, we also report the results of a VFDT augmented with adaptive Na\"ive Bayes models \cite{gama2003accurate}. Finally, to improve the efficiency of the EFDT algorithm, we set the minimum number of observations between re-evaluations to 1,000.

For the sake of completeness, we also looked at two state-of-the-art ensembles of the Hoeffding Tree, an Adaptive Random Forest \cite{gomes2017adaptive} and a Leveraging Bagging Ensemble \cite{bifet2010leveraging}. Both ensembles were trained with 3 basic Hoeffding Tree classifiers as weak learners. We configured the weak learners in the same way as the stand-alone VFDT model. Otherwise, we used the default parameters of the ensembles specified in scikit-multiflow.

In addition, we evaluated FIMT-DD \cite{ikonomovska2011learning}. To the best of our knowledge, there is no publicly available Python implementation of a FIMT-DD classification model. Therefore, we implemented the classifier based on the description in the paper.\footnote{The FIMT-DD implementation can also be accessed via Github at \url{https://github.com/haugjo/dynamic-model-tree}} Our implementation uses the second drift adjustment strategy proposed by the authors, i.e., it deletes branches where the Page Hinkley test issues an alert. We used a default learning rate of 0.01 for the simple models and a threshold of 0.01 for the significance test based on Hoeffding's inequality. Besides, we specified a threshold of 0.05 to break ties between split candidates with similar gain.

Finally, note that we only allowed binary splits in all incremental decision trees. The Dynamic Model Tree was configured in the way described in Section \ref{sec:implementation}.

\begin{figure*}[h!]
\centering
\subfloat[Hyperplane (Incremental Drift), F1 Measure]{
        \includegraphics[width=0.48\textwidth]{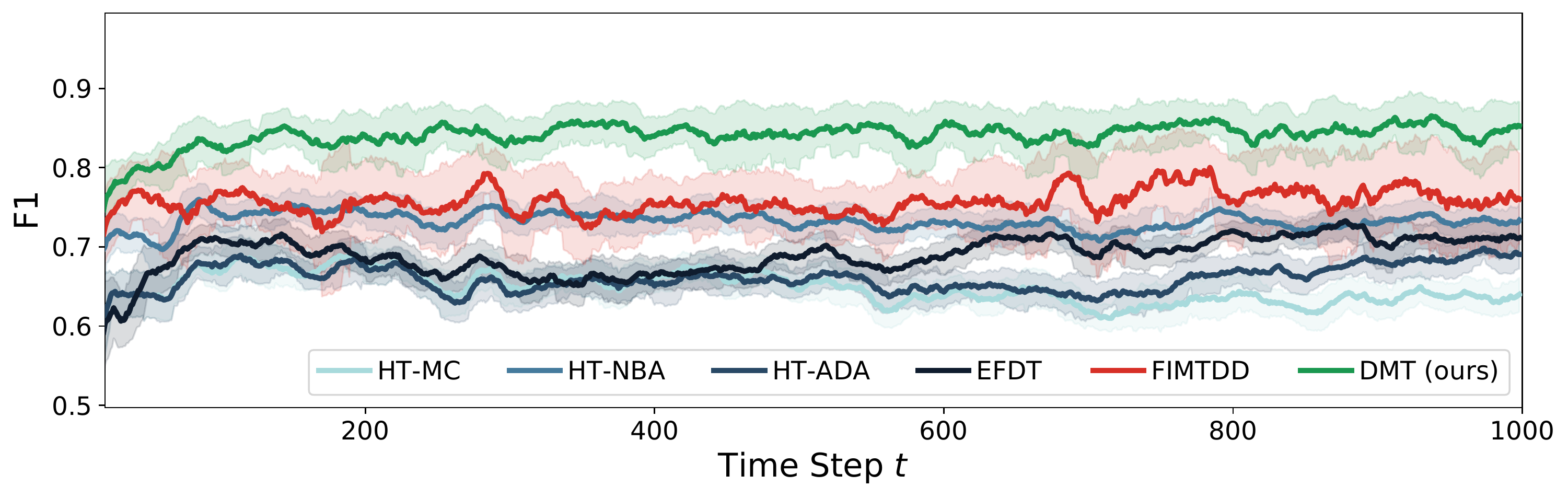}
}
\hfill
\subfloat[Hyperplane (Incremental Drift), Log Number of Splits]{
        \includegraphics[width=0.48\textwidth]{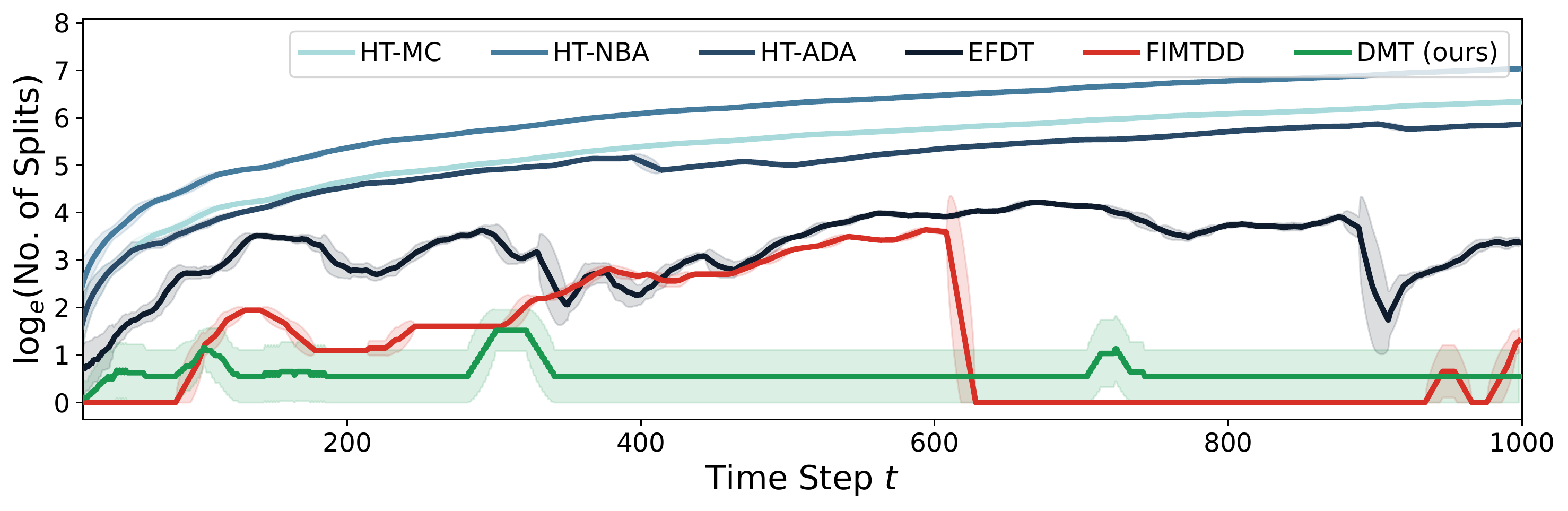}
}
\hfill
\subfloat[SEA (Abrupt Drifts), F1 Measure]{
        \includegraphics[width=0.48\textwidth]{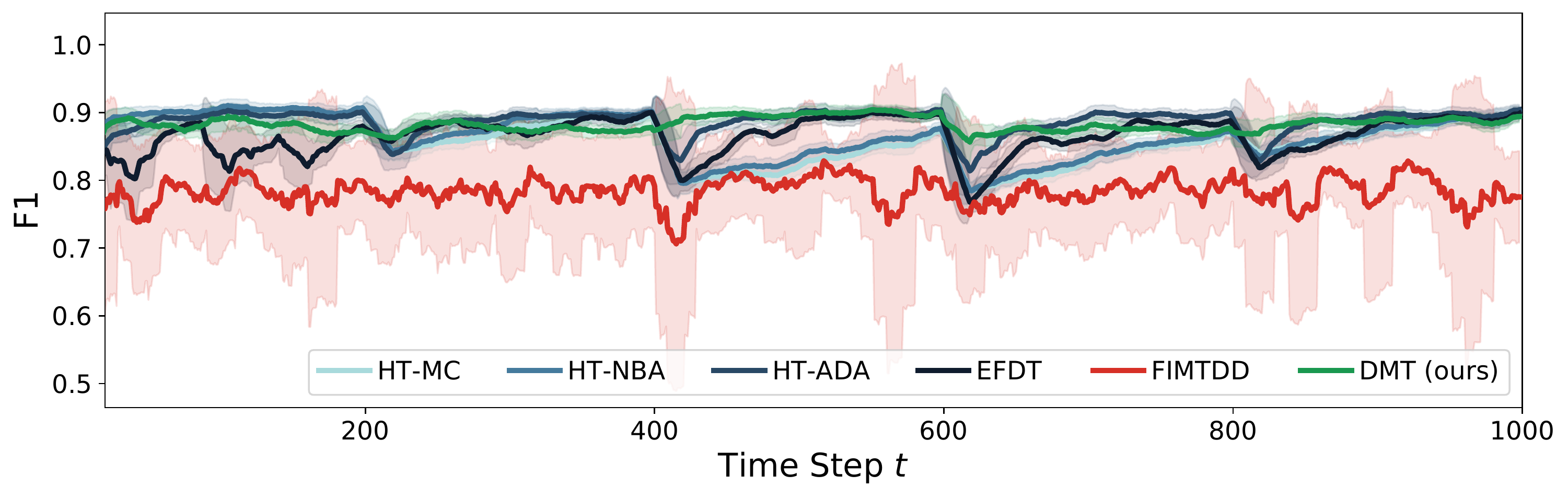}
}
\hfill
\subfloat[SEA (Abrupt Drifts), Log Number of Splits]{
        \includegraphics[width=0.48\textwidth]{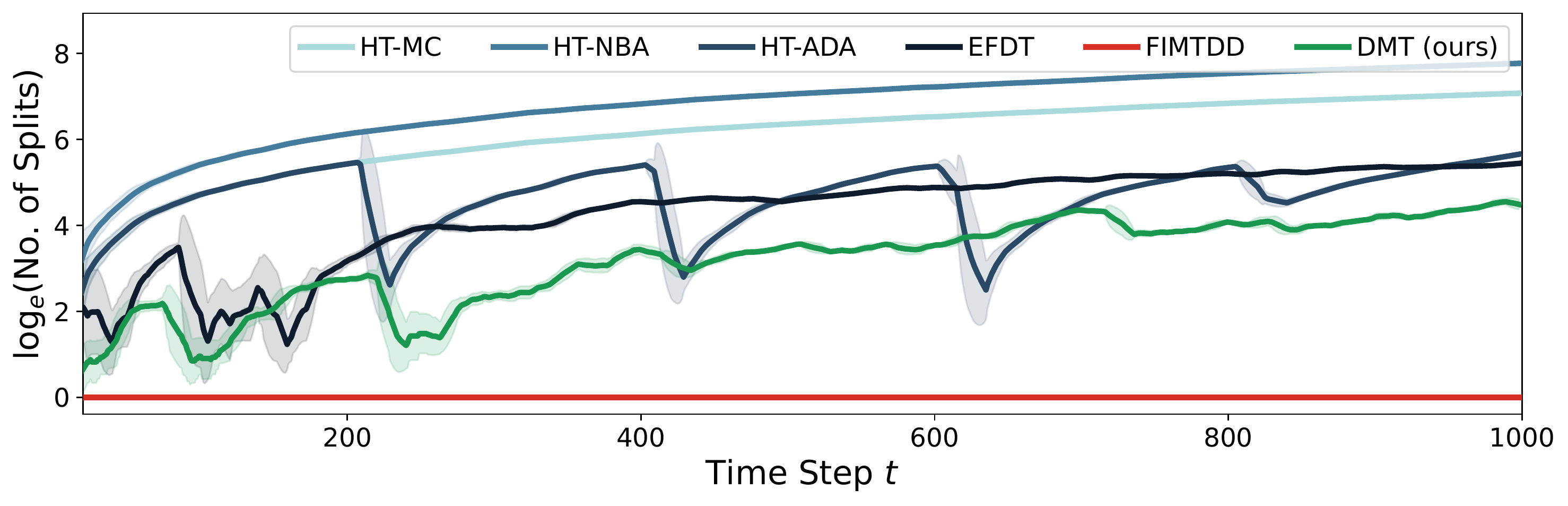}
}
\hfill
\subfloat[Insects-Inc. (Incremental Drift), F1 Measure]{
        \includegraphics[width=0.48\textwidth]{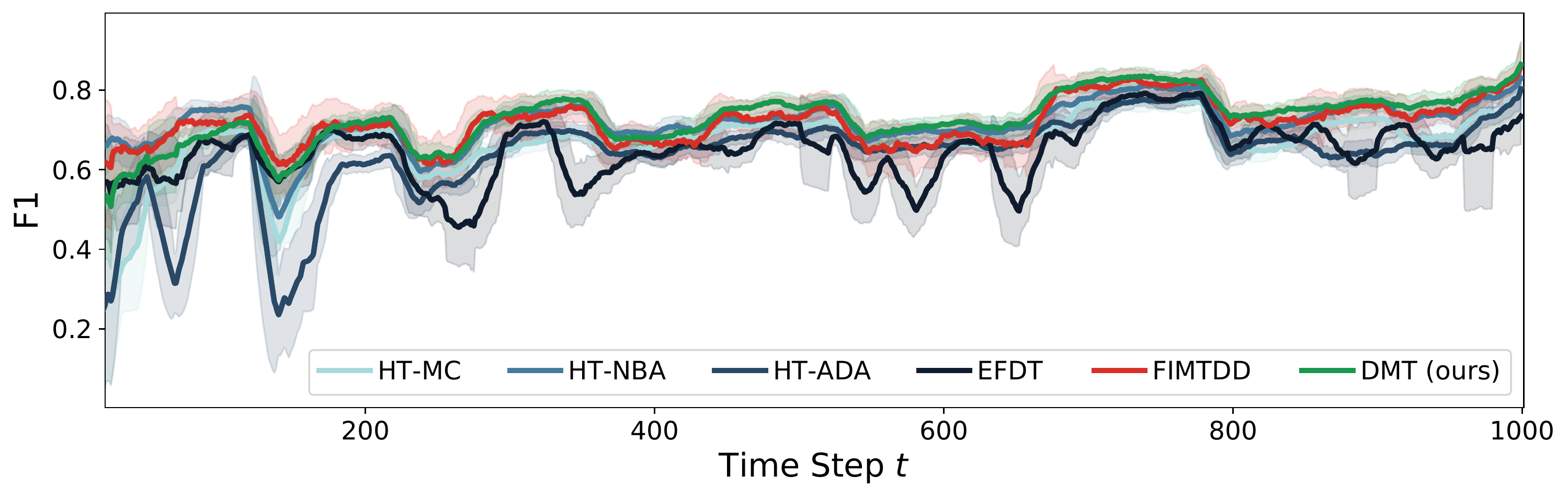}
}
\hfill
\subfloat[Insects-Inc. (Incremental Drift), Log Number of Splits]{
        \includegraphics[width=0.48\textwidth]{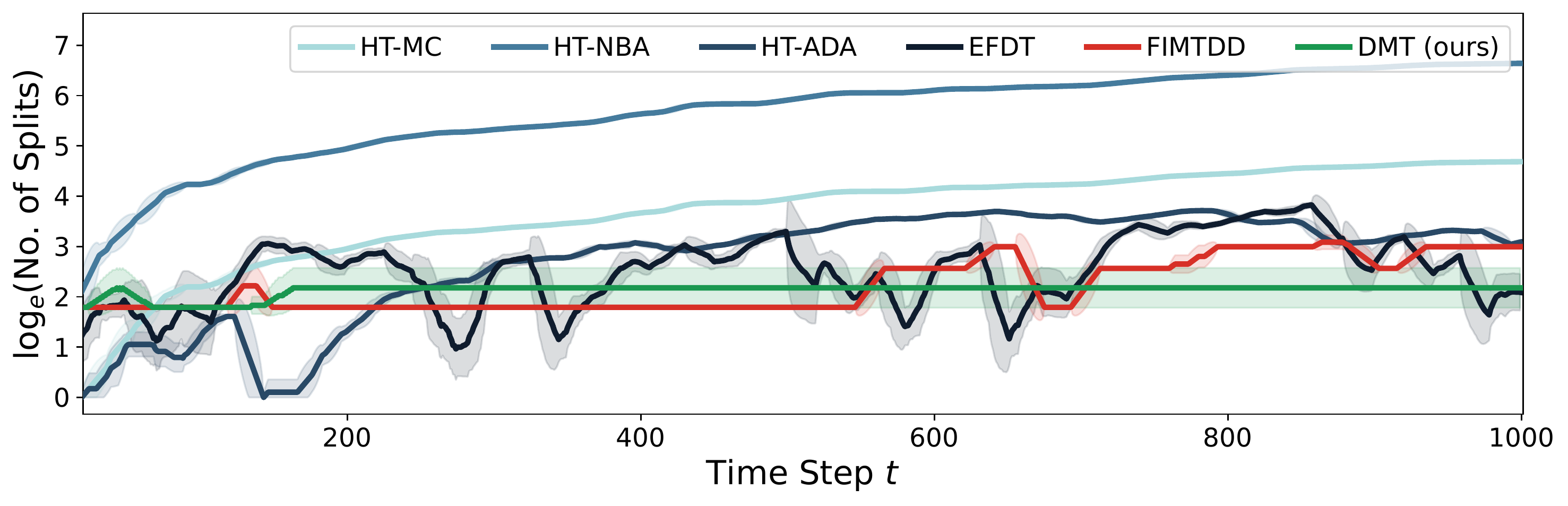}
}
\hfill
\subfloat[TüEyeQ (Abrupt Drifts), F1 Measure]{
        \includegraphics[width=0.48\textwidth]{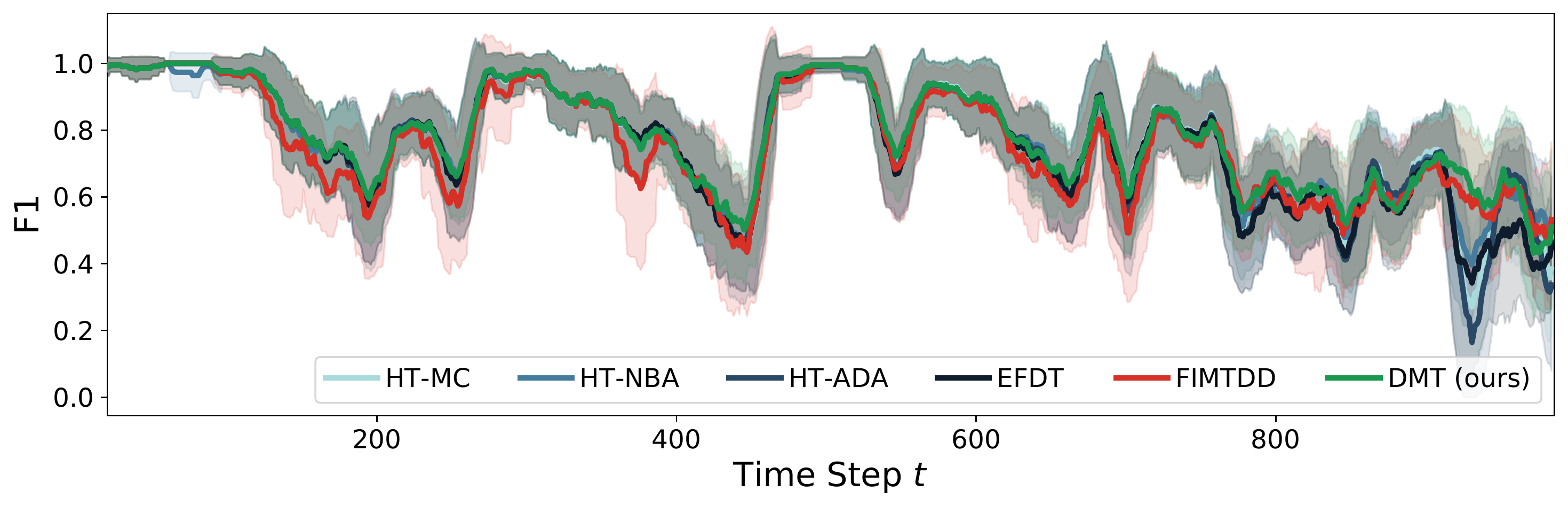}
}
\hfill
\subfloat[TüEyeQ (Abrupt Drifts), Log Number of Splits]{
        \includegraphics[width=0.48\textwidth]{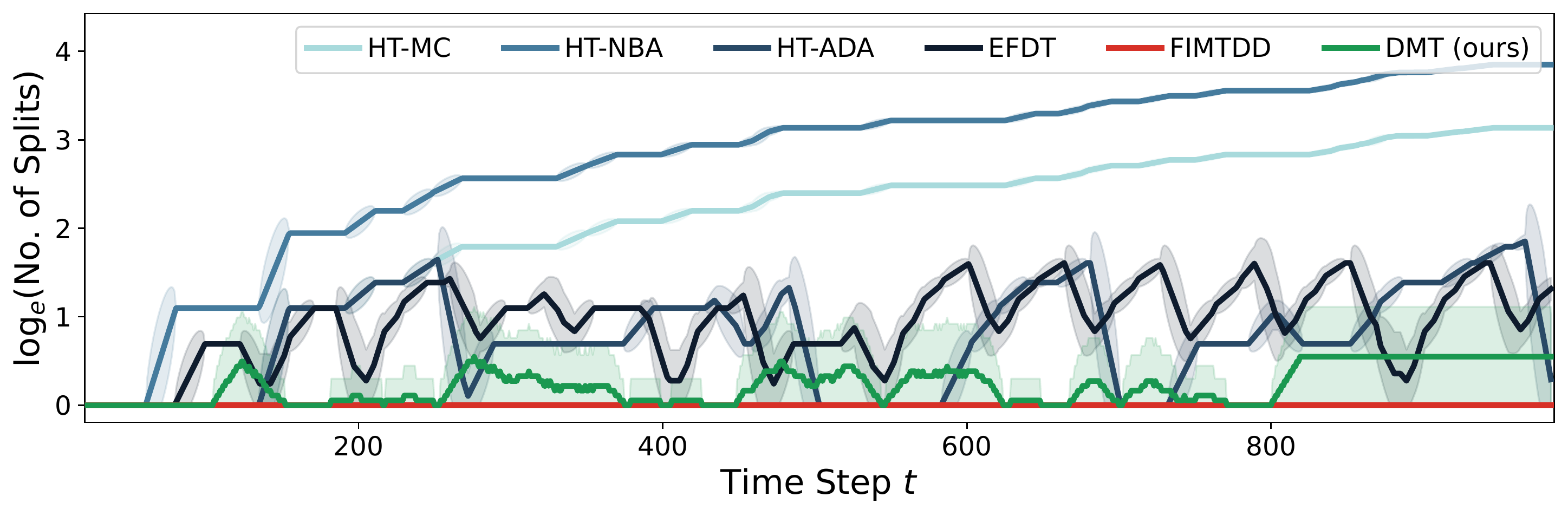}
}
\captionsetup{width=\linewidth}
\caption{\textit{Performance and Complexity Under Concept Drift.} We show the F1 scores and the log number of splits for four data sets with known concept drift. We indicate the types of concept drift in parentheses. Specifically, we show the mean and standard deviation (shaded area) for a sliding window aggregation with a window size of 20. The Dynamic Model Tree has less performance degradation and recovers faster after a concept drift, while often remaining shallower than existing models.}
\label{fig:recall_complexity}
\end{figure*}

\subsection{Performance Measures}
\subsubsection{Predictive Performance}
Classification error and accuracy are common measures for evaluating online classifiers. However, both measures might produce biased results for imbalanced data. As our evaluation incorporates many imbalanced data sets, we report the F1 measure instead. The F1 measure is the harmonic mean of precision and recall and provides reliable results even for strong imbalances.

\subsubsection{Interpretability/Complexity}\label{sec:interpretability_measure}
Since there is no common measure of interpretability, one usually resorts to heuristics. For example, one can compare the number of parameters in linear models or the number of nodes in decision trees. Unfortunately, in our case there is no clear separation between model families. In particular, a comparison between the complexities of Hoeffding and Model Trees is difficult, as their leaf nodes offer different degrees of expressiveness. Hence, we consider the \textit{number of splits} in our evaluation, which we calculated as follows: Each inner node counted as one split. Majority-weighted leaf nodes did not contribute to the total number of splits. Conversely, the leaf classifiers can be considered as another final split of the observations. Accordingly, we counted one more split for binary classifiers and $c$ more splits for multiclass classifiers, where $c$ is the number of classes. Compared to measuring the total number of nodes, the \textit{number of splits} accounts for the different leaf types of Hoeffding Trees and Model Trees. For completeness, we also report the \textit{number of parameters}. Specifically, we counted one parameter per inner node corresponding to the split value. We counted leaf nodes as either one (majority class) or $m$ (linear model weights; Na\"ive Bayes conditional probabilities) additional parameters, where $m$ is the number of features.\footnote{For multinomial classification, we counted the parameters corresponding to each class.}

In practice, it depends on a given application whether the simple leaf models should be considered as limiting interpretability or not. That is, as mentioned earlier, simple models can offer significant advantages in terms of local feature-based explainability. Accordingly, we generally consider the \textit{number of splits} to be a more reliable indication of the interpretability of incremental decision trees. Still, instead of giving too much importance to heuristics, one should aim for online learning models that have meaningful interpretability properties, such as those proposed in this paper.

\subsubsection{Computational Efficiency}
Computational efficiency depends on the respective implementation and hardware configuration. As we used both scikit-multiflow and custom implementations, we did not focus on computational efficiency in the experiments. However, for the sake of completeness, we provide the average computation time for one train/test-iteration of each model in Table \ref{tab:time}.

\begin{table}[t]
\caption{\textit{Computation Time in Seconds (lower is better).} We show the mean and standard deviation of the computation time for one test/train iteration over all data sets.}
    \label{tab:time}
    \centering
    \begin{adjustbox}{max width=\columnwidth}
        \begin{tabular}{llllll}
        \toprule
        DMT (ours) & FIMT-DD & VFDT (MC) & VFDT (NBA) & HT-ADA & EFDT\\
        \cmidrule(lr){1-6}
        0.53 ± 0.21 &   1.12 ± 0.57 &   \textbf{0.06 ± 0.03} &   0.14 ± 0.03 &  0.34 ± 0.08  &  17.23 ± 6.33 \\
        \bottomrule
        \end{tabular}
    \end{adjustbox}
\end{table}

\subsection{Results}
In the following, we discuss our most important findings.

\subsubsection{Predictive Performance}
Table \ref{tab:f1} shows the average F1 measure of all models and data sets. Using simple leaf models instead of majority voting has generally improved the obtained F1 score (see DMT, FIMT-DD and VFDT (NBA)). This advantage is most evident in the Hyperplane data set. The Hyperplane data was generated by rotating a decision hyperplane in multidimensional space. Thus, after a few splits, the observations can be linearly separated sufficiently well by the simple models. Although no model achieved good predictive quality on Poker, the VFDT with Na\"ive Bayes has a higher average score than the Model Trees, suggesting that a different simple model type may improve the results.

In general, FIMT-DD and the Dynamic Model Tree obtained similar F1 scores. However, our framework outperformed FIMT-DD for Airlines and the synthetic data sets. Looking at the behavior of FIMT-DD over time, its disadvantage can often be attributed to slow growth (e.g., Airlines, Agrawal, SEA) or aggressive pruning (e.g., Agrawal, Hyperplane). Accordingly, a less strict split threshold, different purity measures, and alternative pruning strategies could be explored in the future. Similarly, the Dynamic Model Tree may perform poorly in the first time steps if the random initial weights have not yet converged. This effect is noticeable in the averaged result of Electricity, since it is a relatively small data set. To speed up the initial training of the simple models, one may experiment with dynamic learning rates.
As can be seen from the Gas data set, the VFDT and HT-Ada implementations may have difficulty finding optimal split candidates for high-dimensional and continuous feature sets. Both models remained extremely shallow and obtained poor predictive performance. In such cases, where it is difficult or infeasible to find a good split value among all possible candidates, the simple leaf models can provide an advantage. Besides, HT-Ada was not competitive for Covertype and Insects-Abrupt. Here, pruning near the root caused temporary declines of the F1 score. Such behaviour might be avoided by a less aggressive pruning strategy or a more robust drift detection scheme.

Our framework obtained either the best or second best average F1 score for all data sets with known concept drift (TüEyeQ, Insects-Abrupt, Insects-Incremental, SEA, Agrawal and Hyperplane). We depict the detailed results of four data sets in Figure \ref{fig:recall_complexity}. The Dynamic Model Tree often suffers only minor performance deterioration after a concept drift. Compared to the other models, our framework usually recovers faster from both abrupt and incremental concept drift. The effect is most notable in the SEA and Insects data set.

In summary, the proposed Dynamic Model Tree (DMT) is among the best performing models for most data sets. In fact, our framework ranks first place on average, even when the more powerful ensemble models are taken into account.

\begin{figure}
    \centering
    \includegraphics[width=\columnwidth]{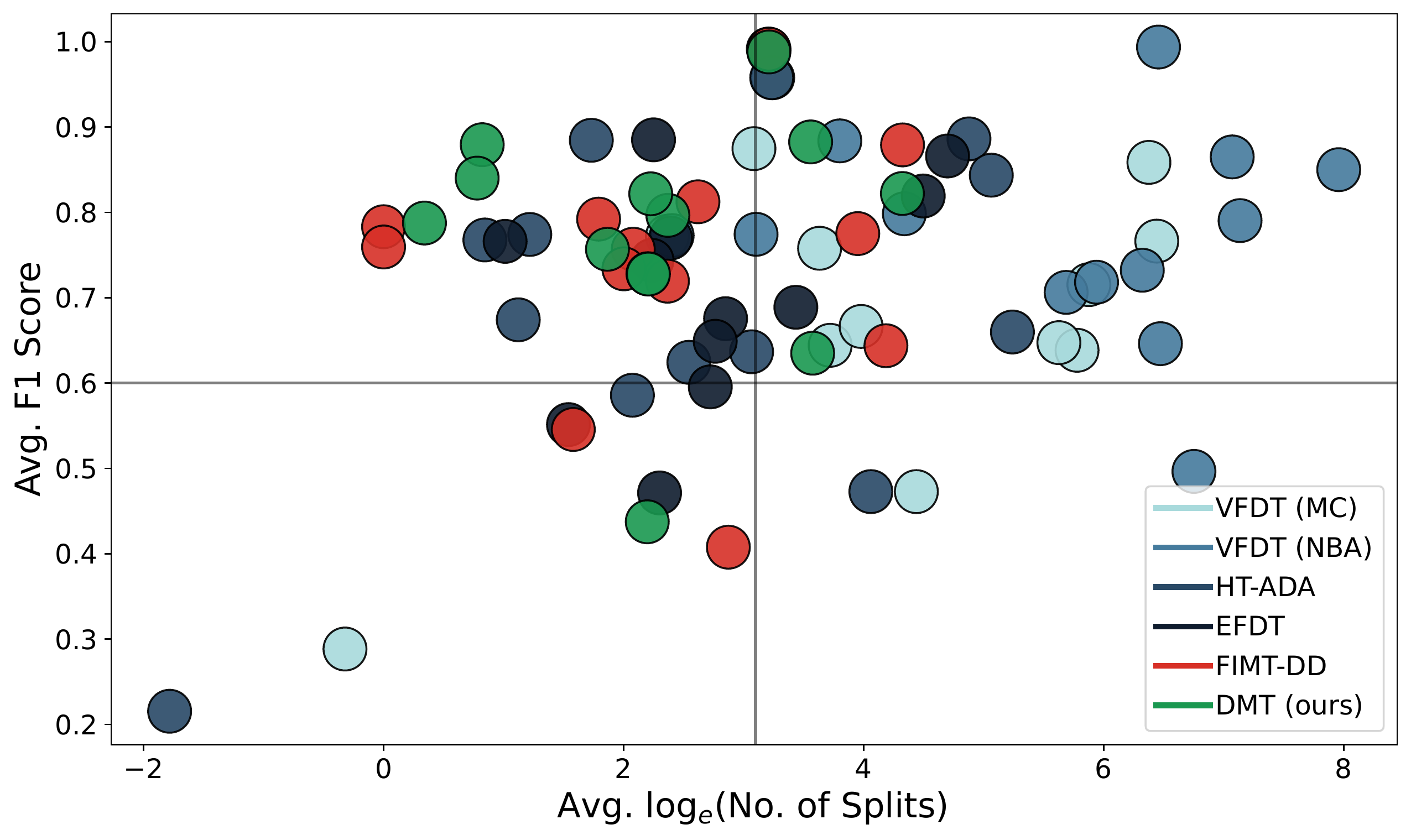}
    \caption{\textit{Predictive Performance vs. Model Complexity.} Above, we compare the F1 measure and the logarithm of the number of splits of each incremental decision tree. The number of splits is an indication of model complexity, which in turn is a common proxy for the interpretability. That is, fewer splits can usually be associated with higher interpretability. Each point corresponds to the average measure of one data set. Detailed results can be found in the Tables \ref{tab:f1} and \ref{tab:n_splits}. Ideally, we aim for a large F1 score and a small number of splits, corresponding to a value in the top left quadrant. While achieving competitive F1 scores, the Dynamic Model Tree generally manages to reduce the number of splits compared to the Hoeffding Trees.}
    \label{fig:scatter}
\end{figure}

\subsubsection{Complexity and Interpretability}
As described above, we report the \textit{number of splits} (Table \ref{tab:n_splits}) and the \textit{number of parameters} (Table \ref{tab:n_param}) as indicators for the interpretability of a model. Model Trees often maintain a shallower tree structure than Hoeffding Trees. This effect can be attributed to the additional flexibility provided by the simple models. For example, the synthetic Hyperplane and SEA data sets can both be separated by a hyperplane. The Dynamic Model Tree was able to represent these linear relationships with fewer splits than the Hoeffding Trees, while achieving similar or higher predictive quality.
The complexities of the Dynamic Model Tree and FIMT-DD often varied. We attribute this effect to the loss-based gains that allow our framework to meet the \textit{consistency with parent splits} and \textit{model minimality} properties. Specifically, while the Dynamic Model Tree will only retain a split, if it is beneficial in terms of the loss, FIMT-DD retains a split as long as the Page Hinkley test does not detect a concept drift. This may lead to overly complex trees that offer only slight or no improvements in terms of the F1 score (see Electricity and Bank). Likewise, if there is no significant difference according to the Hoeffding bound, FIMT-DD does not split a node, even though this might reduce the expected loss (see Airlines and SEA). In addition, FIMT-DD aims to reduce the standard deviation of the target and can therefore obtain leaf nodes that are extremely imbalanced towards one class. While this would be beneficial for majority weighting, it could make training simple (linear) models more difficult. Ultimately, this may reduce the predictive performance of FIMT-DD compared to a Dynamic Model Tree, even though both models have similar complexity (see Agrawal).

The Dynamic Model Tree ranks first for the average \textit{number of splits} and third for the more conservative \textit{number of parameters}. Indeed, Figure \ref{fig:recall_complexity} shows that the complexity of the Dynamic Model Tree typically remains low over time, while other methods such as VFDT produce increasingly larger trees. Besides, the Dynamic Model Tree can adapt to different types of concept drift without drastically changing its complexity.

In general, our results demonstrate that high predictive performance and low complexity need not be mutually exclusive in an evolving data stream. The relationship of predictive performance and complexity is also shown in Figure \ref{fig:scatter}. A summary of our experiments is depicted in Table \ref{tab:summary}.

\begin{table}[t]
\caption{\textit{Experiment Summary.} We provide a concise summary of our experiments. For more detailed results, please see the remaining tables and plots. We ranked all methods according to four categories. Both predictive performance categories are based on the results in Table \ref{tab:f1}. The second category reflects the average performance for the data sets with known concept drift. The complexity and efficiency scores are based on the average results in the Tables \ref{tab:n_splits} and \ref{tab:time}. We used the following methodology: The best and worst models per category have received a score of \textbf{++} and \textbf{-- --} respectively. The other methods have received a score of \textbf{+} or \textbf{--} depending on whether they were above or below the median.
}
    \label{tab:summary}
    \centering
    \begin{adjustbox}{max width=\columnwidth}
        \begin{tabular}{lcccc}
        \toprule
         & Overall & Pred. Performance  & Complexity/ & Computational \\
        Model $\backslash$ Category & Pred. Performance & For Known Drift  & Interpretability & Efficiency \\
        \cmidrule(lr){1-1} \cmidrule(lr){2-5}
        DMT (ours) & \textbf{++} & \textbf{++} & \textbf{++} & \textbf{--} \\
        FIMT-DD \cite{ikonomovska2011learning} & \textbf{+}  & \textbf{--} & \textbf{+} & \textbf{--} \\
        VFDT (MC) \cite{domingos2000mining} & \textbf{--} & \textbf{-- --} & \textbf{--} & \textbf{++} \\
        VFDT (NBA) \cite{gama2003accurate} & \textbf{+} & \textbf{+} & \textbf{-- --} & \textbf{+} \\
        HT-Ada \cite{bifet2009adaptive} & \textbf{-- --} & \textbf{--} & \textbf{--} & \textbf{+} \\
        EFDT \cite{manapragada2018extremely} & \textbf{--} & \textbf{+} & \textbf{+} & \textbf{-- --} \\
        \bottomrule
        \end{tabular}
    \end{adjustbox}
\end{table}

\section{Conclusion}\label{sec:conclusion}
In this paper, we introduced the Dynamic Model Tree, a flexible and interpretable framework for machine learning on large-scale evolving data streams. A Dynamic Model Tree adheres to sensible properties that make it a reliable choice even in highly challenging streaming scenarios. Our experiments show that the proposed framework can achieve state-of-the-art performance with a fraction of the complexity of many previous methods. In particular, the Dynamic Model Tree automatically adapts to different types of concept drift, without the need for complex model extensions common in existing frameworks. Accordingly, we hope that our work will support the current trend towards more efficient and interpretable machine learning.


\end{document}